\newtheorem{lemma}{Lemma}
\newtheorem{theorem}{Theorem}
\newtheorem{assumption}{Assumption}
\newtheorem{remark}{Remark}
\theoremstyle{definition}
\title{Faster DBSCAN via subsampled similarity queries}
\author{%
  Heinrich Jiang\thanks{Equal contribution.} \\
  Google Research\\
  \texttt{heinrichj@google.com} \\
   \And
   Jennifer Jang$^*$ \\
   Waymo \\
   \texttt{jangj@waymo.com} \\
   \And
   Jakub Łącki \\
   Google Research \\
   \texttt{jlacki@google.com} \\
}
\begin{document}

\maketitle

\begin{abstract}
DBSCAN is a popular density-based clustering algorithm. It computes the $\epsilon$-neighborhood graph of a dataset and uses the connected components of the high-degree nodes to decide the clusters. However, the full neighborhood graph may be too costly to compute with a worst-case complexity of $O(n^2)$. In this paper, we propose a simple variant called SNG-DBSCAN, which clusters based on a subsampled $\epsilon$-neighborhood graph, only requires access to similarity queries for pairs of points and in particular avoids any complex data structures which need the embeddings of the data points themselves. The runtime of the procedure is $O(sn^2)$, where $s$ is the sampling rate. We show under some natural theoretical assumptions that $s \approx \log n/n$ is sufficient for statistical cluster recovery guarantees leading to an $O(n\log n)$ complexity. We provide an extensive experimental analysis showing that on large datasets, one can subsample as little as $0.1\%$ of the neighborhood graph, leading to as much as over 200x speedup and 250x reduction in RAM consumption compared to scikit-learn's implementation of DBSCAN, while still maintaining competitive clustering performance.
\end{abstract}
\section{Introduction}

DBSCAN \citep{ester1996density} is a popular density-based clustering algorithm which has had a wide impact on machine learning and data mining. 
Recent applications include superpixel segmentation \cite{shen2016real}, object tracking and detection in self-driving \cite{wagner2015modification,guan2010improved},  wireless networks \cite{emadi2018novel,wang2019learning}, GPS \cite{diker2012estimation,panahandeh2015clustering}, social network analysis \cite{khatoon2019efficient,yao2019density}, urban planning \cite{fan2019consumer,pavlis2018modified}, and medical imaging \cite{tran2012density,baselice2015dbscan}. The clusters that DBSCAN discovers are based on the connected components of the neighborhood graph of the data points of sufficiently high density (i.e. those with a sufficiently high number of data points in their neighborhood), where the neighborhood radius and the density threshold are the hyperparameters. 

One of the main differences of density-based clustering algorithms such as DBSCAN compared to popular objective-based approaches such as k-means \cite{arthur2006k} and spectral clustering \cite{von2007tutorial} is that density-based algorithms are non-parametric. As a result, DBSCAN makes very few assumptions on the data, automatically finds the number of clusters, and allows clusters to be of arbitrary shape and size \cite{ester1996density}. However, one of the drawbacks is that it has a worst-case quadratic runtime \cite{gan2015dbscan}.
With the continued growth of modern datasets in both size and richness, non-parametric unsupervised procedures are becoming ever more important in understanding such datasets. Thus, there is a critical need to establish more efficient and scalable versions of these algorithms. 

The computation of DBSCAN can be broken up into two steps. The first is computing the $\epsilon$-neighborhood graph of the data points, where the $\epsilon$-neighborhood graph is defined with data points as vertices and edges between pairs of points that are distance at most $\epsilon$ apart. The second is processing the neighborhood graph to extract the clusters. The first step has worst-case quadratic complexity simply due to the fact that each data point may have of order linear number of points in its $\epsilon$-neighborhood for sufficiently high $\epsilon$. However, even if the $\epsilon$-neighborhood graph does not have such an order of edges, computing this graph remains costly: for each data point, we must query for neighbors in its $\epsilon$-neighborhood, which is worst-case linear time for each point. There has been much work done in using space-partitioning data structures such as KD-Trees \citep{bentley1975multidimensional}  to improve neighborhood queries, but these methods still run in linear time in the worst-case. Approximate methods (e.g. \cite{indyk1998approximate,datar2004locality}) answer queries in sub-linear time, but such methods come with few guarantees. The second step is processing the neighborhood graph to extract the clusters, which consists in finding the connected components of the subgraph induced by nodes with degree above a certain threshold (i.e. the MinPts hyperparameter in the original DBSCAN \cite{ester1996density}). This step is linear in the number of edges in the $\epsilon$-neighborhood graph.

Our proposal is based on a simple but powerful insight: the full $\epsilon$-neighborhood graph may not be necessary to extract the desired clustering. We show that we can subsample the edges of the neighborhood graph while still preserving the connected components of the core-points (the high density points) on which DBSCAN's clusters are based.

To analyze this idea, we assume that the points are sampled from a distribution defined by a density function satisfying certain standard \cite{jiang2017density,steinwart2011adaptive} conditions (e.g., cluster density is sufficiently high, and clusters do not become arbitrarily thin). Such an assumption is natural because DBSCAN recovers the high-density regions as clusters \cite{ester1996density,jiang2017density}.
Under this assumption we show that the minimum cut of the $\epsilon$-neighborhood graph is as large as $\Omega(n)$, where $n$ is the number of datapoints.
This, combined with a sampling lemma by Karger~\cite{karger1999random}, implies that we can sample as little as $O(\log n/n)$ of the edges uniformly while preserving the connected components of the $\epsilon$-neighborhood graph \emph{exactly}.
-
Our algorithm, SNG-DBSCAN, proceeds by constructing and processing this subsampled $\epsilon$-neighborhood graph all in $O(n\log n)$ time. Moreover, our procedure only requires access to $O(n\log n)$ similarity queries for random pairs of points (adding an edge between pairs if they are at most $\epsilon$ apart). Thus, unlike most implementations of DBSCAN which take advantage of space-partitioning data structures, we don't require the embeddings of the datapoints themselves. In particular, our method is compatible with arbitrary similarity functions instead of being restricted to a handful of distance metrics such as the Euclidean.

We provide an extensive empirical analysis showing that SNG-DBSCAN is effective on real datasets. We show on large datasets (on the order of a million datapoints) that we can subsample as little as $0.1\%$ of the neighborhood graph and attain competitive performance to sci-kit learn's implementation of DBSCAN while consuming far fewer resources -- as much as 200x speedup and 250x less RAM consumption on cloud machines with up to 750GB of RAM. In fact, for larger settings of $\epsilon$ on these datasets, DBSCAN fails to run at all due to insufficient RAM. We also show that our method is effective even on smaller datasets. Sampling between $1\%$ to $30\%$ of the edges depending on the dataset, SNG-DBSCAN shows a nice improvement in runtime while still maintaining competitive clustering performance.

\section{Related Work}

There is a large body of work on making DBSCAN more scalable. Due to space we can only mention some of these works here. The first approach is to more efficiently perform the nearest neighbor queries that DBSCAN uses when constructing the neighborhood graph either explicitly or implicitly \cite{huang2009grid,kumar2016fast}. However, while these methods do speed up the computation of the neighborhood graph, they may not save memory costs overall as the number of edges remains similar. Our method of subsampling the neighborhood graph brings both memory and computational savings.

A natural idea for speeding up the construction of the nearest neighbors graph is to compute it approximately, for example by using locality-sensitive hashing (LSH)~\cite{indyk1997locality} and thus improving the overall running time~\cite{shiqiu2019dbscan, wu2007linear}. At the same time, since the resulting nearest neighbor graph is incomplete, the current state-of-the-art LSH-based methods lack any guarantees on the quality of the solutions they produce.
This is in sharp contrast with SNG-DBSCAN, which, under certain assumptions, gives exactly the same result as DBSCAN.

Another approach is to first find a set of "leader" points that preserve the structure of the original dataset and cluster those "leader" points first. Then the remaining points are clustered to these "leader points" \cite{geng2000fast,viswanath2009rough}. \citet{liu2006fast} modified DBSCAN by selecting clustering seeds among unlabeled core points to reduce computation time in regions that have already been clustered. Other heuristics include \citep{,patwary2012new,kryszkiewicz2010ti}. More recently, \citet{jang2019dbscan} pointed out that it's not necessary to compute the density estimates for each of the data points and presented a method that chooses a subsample of the data using $k$-centers to save runtime on density computations. These approaches all reduce the number of data points on which we need to perform the expensive neighborhood graph computation. SNG-DBSCAN preserves all of the data points but subsamples the edges instead.
  
There are also a number of approaches based on leveraging parallel computing \cite{chen2010parallel,patwary2012new,gotz2015hpdbscan}, which includes MapReduce based approaches \cite{fu2011research,he2011mr,dai2012efficient,noticewala2014mr}. Then there are also distributed approaches to DBSCAN where data is partitioned across different locations, and there may be communication cost constraints \cite{neto2015g2p,lulli2016ng}. \citet{andrade2013g} provides a GPU implementation. In this paper, we assume a single processor, although our method can be implemented in parallel which can be a future research direction.


\section{Algorithm}
\begin{algorithm}[H]
\caption{Subsampled Neighborhood Graph DBSCAN (SNG-DBSCAN)}
\label{alg:dbscan_subsampled}
\begin{algorithmic}[H]
   \State \textbf{Inputs:} $X_{[n]}$, sampling rate $s$, $\epsilon$, $\text{MinPts}$
   \State Initialize graph $G = (X_{[n]}, \emptyset)$.
   \State For each $x \in X_{[n]}$, sample $\lceil sn \rceil$ examples from $X_{[n]}$, $x_{i_1},...,x_{i_{\lceil sn \rceil}}$ and add edge $(x, x_{i_j})$ to $G$ if $|x - x_{i_j}| \le \epsilon$ for $j \in [\lceil sn \rceil]$.
    \State Let $\mathcal{K} := \{K_1,...,K_\ell\}$ be the connected components of the subgraph of $G$ induced by vertices of degree at least $\text{MinPts}$.
    \State Initialize $C_i = \emptyset$ for $i \in [\ell]$ and define $\mathcal{C} := \{C_1,...,C_\ell\}$. For each $x \in X_{[n]}$, add $x$ to $C_i$ if $x \in K_i$. Otherwise, if $x$ is connected to some point in $K_i$, add $x$ to $C_i$ (if multiple such $i \in [\ell]$ exist, choose one arbitrarily).
    \State {\bf return} $\mathcal{C}$.
\end{algorithmic}
\end{algorithm}
\begin{wrapfigure}{r}{70mm}

\begin{center}
\includegraphics[width=0.5\textwidth]{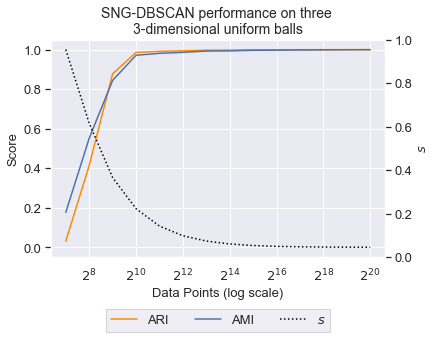}
\caption{{\bf SNG-DBSCAN on simulated uniform balls}. Data is generated as a uniform mixture of $3$ balls in $\mathbb{R}^3$, where each ball is a {\it true} cluster. The $x$-axis is the number of examples $n$ on a log scale. We set $s = \lceil 20 \cdot \log n / n \rceil$ and show the performance of SNG-DBSCAN under two clustering metrics Adjusted Rand Index (ARI) and Adjusted Mutual Information (AMI). We see that indeed, sampling $s \approx \log n / n$ is sufficient to recover the right clustering for sufficiently large $n$, as the theory suggests. Clustering quality is low at the beginning because the sample size was insufficient to learn the clusters.}\label{fig:uniform}
\end{center}
\vspace{-0.8cm}
\end{wrapfigure}

We now introduce our algorithm SNG-DBSCAN (Algorithm~\ref{alg:dbscan_subsampled}). It proceeds by first constructing the sampled $\epsilon$-neighborhood graph given a sampling rate $s$. We do this by initializing a graph whose vertices are the data points, sampling $s$ fraction of all pairs of data points, and adding corresponding edges to the graph if points are less than $\epsilon$ apart. Compared to DBSCAN, the latter computes the full $\epsilon$-neighborhood graph, typically using space-partitioning data structures such as kd-trees, while SNG-DBSCAN can be seen as using a sampled version of brute-force (which looks at each pair of points to compute the graph). Despite not leveraging space-partitioning data structures, we show in the experiments that SNG-DBSCAN can still be much more scalable in both runtime and memory than DBSCAN. 

The remaining stage of processing the graph is the same as in DBSCAN, where we find the core-points (points with degree at least degree MinPts), compute the connected components induced by the core-points, and then cluster the border-points (those within $\epsilon$ of a connected component). The remaining points are not clustered and are referred to as noise-points or outliers. The entire procedure runs in $O(sn^2)$ time. We will show in the theory that $s$ can be of order $O(\log n/n)$ while still maintaining statistical consistency guarantees of recovering the {\it true} clusters under certain density assumptions. Under this setting, the procedure runs in $O(n\log n)$ time, and we show on simulations in Figure~\ref{fig:uniform} that such a setting for $s$ is sufficient.

\section{Theoretical Analysis}

For the analysis, we assume that we draw $n$ i.i.d. samples $X_{[n]} := \{x_1,...,x_n\}$ from a distribution $\mathcal{P}$. The goal is to show that the sampled version of DBSCAN (Algorithm~\ref{alg:dbscan_subsampled}) can recover the true clusters with statistical guarantees, where the true clusters are the connected components of a particular upper-level set of the underlying density function, as it is known that DBSCAN recovers such connected components of a level set \cite{jiang2017density,wang2017optimal,sriperumbudur2012consistency}.

We show results in two situations: (1) when the clusters are well-separated and (2) results for recovering the clusters of a particular level of the density under more general assumptions.

\subsection{Recovering Well-Separated Clusters}

We make the following Assumption~\ref{assumption:basic} on $\mathcal{P}$. The assumption is three-fold. The first part ensures that the true clusters are of sufficiently high density level and the noise areas are of sufficiently low density. The second part ensures that the true clusters are pairwise separated by a sufficiently wide distance and that they are also separated away from the noise regions; such an assumption is common in analyses of cluster trees (e.g. \cite{chaudhuri2010rates,chaudhuri2014consistent,jiang2017density}). Finally, the last part ensures that the clusters don't become arbitrarily thin anywhere. Otherwise it would be difficult to show that the entire cluster will be recovered as one connected component. This has been used in other analyses of level-set estimation \cite{singh2009adaptive}.

\begin{assumption}\label{assumption:basic}
Data is drawn from distribution $\mathcal{P}$ on $\mathbb{R}^D$ with density function $p$.
There exists $R_s, R_0, \rho, \lambda_C, \lambda_N > 0$ and compact connected sets $C_1,...,C_\ell \subset \mathbb{R}^D$ and subset $N \subset \mathbb{R}^D$ such that
\begin{itemize}
    \item $p(x) \ge \lambda_C$ for all $x \in C_i, i \in [\ell]$ (i.e. clusters are high-density), $p(x) \le \lambda_N$ for all $x \in N$ (i.e. outlier regions are low-density), $p(x) = 0$ everywhere else, and that $\rho\cdot  \lambda_C > \lambda_N$ (the cluster density is sufficiently higher than noise density).
    \item $\min_{x \in C_i, x' \in C_j} |x - x'| \ge R_s$ for $i \neq j$ and $i,j \in [\ell]$ (i.e clusters are separated) and $\min_{x \in C_i, x' \in N} |x - x'| \ge R_s$ for $i \in [\ell]$ (i.e. outlier regions are away from the clusters).
    \item For all $0 < r < R_0$, $x \in C_i$ and $i \in [\ell]$ we have $\text{Volume}(B(x, r) \cap C_i) \ge \rho\cdot v_D \cdot r^D$ where $B(x, r) := \{x' : |x - x'| \le r\}$ and $v_D$ is the volume of a unit $D$-dimensional ball. (i.e. clusters don't become arbitrarily thin).
\end{itemize}
\end{assumption}

The analysis proceeds in three steps:
\begin{enumerate}
    \item We give a lower bound on the min-cut of the subgraph of the $\epsilon$-neighborhood graph corresponding to each cluster. This will be useful later as it determines the sampling rate we can use while still ensure these subgraphs remain connected. (Lemma~\ref{lemma:mincut_neighborhood})
    \item We use standard concentration inequalities to show that if $\text{MinPts}$ is set appropriately, we will with high probability determine which samples belong to clusters and which ones don't in the sampled graph. (Lemma~\ref{lemma:corepoints})
    \item We then combine these two results to give a precise bound on the sampling rate $s$ and sample complexity $n$ to show that Algorithm~\ref{alg:dbscan_subsampled} properly identifies the clusters. (Theorem~\ref{theorem:main})
\end{enumerate}

We now give the lower bound on the min-cut of the subgraph of the $\epsilon$-neighborhood graph corresponding to each cluster. This will be useful later as it determines the sampling rate we can use while still ensuring these subgraphs remain connected. As a reminder, for a graph $G = (V,E)$, the size of the cut-set of $S \subseteq V$ is defined as $\text{Cut}_G(S, V\backslash S) := |\{(p,q) \in E : p \in S, q \in V \backslash S\}|$ and the size of the min-cut of $G$ is the smallest proper cut-set size: $\text{MinCut}(G) := \min_{S \subset V, S \neq \emptyset, S\neq V} \text{Cut}_G(S, V\backslash S)$.
\begin{lemma}[Lower bound on min-cut of $\epsilon$-neighborhood graph of core-points]\label{lemma:mincut_neighborhood}
Suppose that Assumption~\ref{assumption:basic} holds and $\epsilon < R_0$. Let $\gamma > 0$. Then there exists a constant $C_{\delta, D, p}$ depending only on $D$ and $p$ such that the following holds for $n \ge C_{D, p} \cdot \log(2/\delta)\cdot \left(\frac{1}{\epsilon}\right)^D \cdot \log^{1 + \gamma}(\frac{1}{\epsilon})$. 
Let $G_{n, \epsilon}$ be the $\epsilon$-neighborhood graph of $X_{[n]}$.
Let $G_{n, \epsilon}(i)$ be the subgraph of $G_{n, \epsilon}$ with nodes in $C_i$.
Then with probability at least $1 - \delta$, for each $i \in [\ell]$, we have that
\begin{align*}
    \text{MinCut}(G_{n, \epsilon}(i)) \ge \frac{1}{4}\cdot  \lambda_C \cdot \rho \cdot v_D\cdot \epsilon^D\cdot n.
\end{align*}
\end{lemma}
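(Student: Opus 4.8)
The plan is to lower‑bound $\mathrm{MinCut}(G_{n,\epsilon}(i))$ uniformly over every proper nonempty vertex set $S$ by showing that a nontrivial cut must ``slice through'' a Euclidean ball of radius $\Theta(\epsilon)$ whose intersection with $C_i$ carries $\Omega(\lambda_C\rho v_D\epsilon^D)$ probability mass, and then converting mass into sample count. \textbf{Step 1 (uniform control of empirical ball masses).} I would first show that, with probability at least $1-\delta$, every closed ball $B=B(x,r)$ with $r\le\epsilon$ satisfies $|\{j:x_j\in B\cap C_i\}|\ge\tfrac12 n\,\mathcal P(B\cap C_i)$. This is a multiplicative Vapnik--Chervonenkis deviation bound: balls in $\mathbb R^D$ have VC dimension $O(D)$, and making this bound non‑vacuous for the smallest balls we use --- whose mass, by the thickness clause of Assumption~\ref{assumption:basic}, can be as small as $\asymp\lambda_C\rho v_D\epsilon^D$ --- forces $n\gtrsim\epsilon^{-D}\big(\log(1/\delta)+\mathrm{polylog}(1/\epsilon)\big)$, which is exactly what the hypothesis $n\ge C_{D,p}\log(2/\delta)(1/\epsilon)^D\log^{1+\gamma}(1/\epsilon)$ supplies (the $\log^{\gamma}(1/\epsilon)$ slack absorbing the logarithmic overhead of the VC/union bound). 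A corollary I will reuse: w.h.p.\ every point of $C_i$ has a sample within distance $\epsilon/4$, since such a ball has mass $\ge\lambda_C\rho v_D(\epsilon/4)^D>0$.

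\textbf{Step 2 (a nontrivial cut slices a $\Theta(\epsilon)$‑ball).} Fix a proper nonempty $S\subseteq V$. Partition $C_i$ according to whether the nearest node of $G_{n,\epsilon}(i)$ lies in $S$ or in $\bar S$; both pieces have positive measure (each node's Voronoi cell meets $C_i$ in positive volume, by thickness), so connectedness of $C_i$ yields a point $z^\ast\in C_i$ every neighborhood of which meets both pieces. Using the corollary of Step 1 I can then extract a node $p\in S$ and a node $q\in\bar S$ with $p,q\in B(z^\ast,\epsilon/2)$. Now \emph{every} node in $B(z^\ast,\epsilon/2)\cap C_i$ is within distance $\epsilon$ of both $p$ and $q$, hence adjacent to both in $G_{n,\epsilon}(i)$; so each such node lying in $S$ contributes the crossing edge to $q$, each lying in $\bar S$ contributes the crossing edge to $p$, and these crossing edges are distinct save possibly the single edge $(p,q)$. \textbf{Step 3 (count).} Since $z^\ast\in C_i$, thickness gives $\mathcal P(B(z^\ast,\epsilon/2)\cap C_i)\ge\lambda_C\rho v_D(\epsilon/2)^D$, so by Step 1 the cut already has $\gtrsim n\lambda_C\rho v_D\epsilon^D$ edges; by instead centering the ``capturing'' ball at a node and taking radius $\epsilon$ (losing a single factor of $2$ to force the captured nodes to lie strictly on one side) one tightens the constant to $\tfrac14$. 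Single‑vertex cuts $S=\{v\}$ are immediate: $\deg(v)=|B(v,\epsilon)\cap C_i\cap V|-1\ge\tfrac12 n\lambda_C\rho v_D\epsilon^D-1$, already above $\tfrac14 n\lambda_C\rho v_D\epsilon^D$ for $n$ as large as assumed --- so the binding cases are the general ones, and in passing the argument shows $G_{n,\epsilon}(i)$ is connected.

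\textbf{The main obstacle} is Step 2: converting ``the cut is nontrivial'' into ``the cut crosses an entire $\Theta(\epsilon)$‑ball's worth of sample points'' cleanly and without losing more than a constant factor. The friction is that the straddling point $z^\ast$ need not be a sample, that sample points inside a radius‑$r$ ball are only guaranteed pairwise $2r$‑close rather than $r$‑close, and that the thickness clause only controls balls centered in $C_i$; one must juggle the several radii ($\epsilon/4$ for ``nearest sample'', $\epsilon/2$ or $\epsilon$ for the capturing ball) to land exactly on $\tfrac14$. The other delicate point is obtaining the $(1/\epsilon)^D$ and merely polylogarithmic dependence in Step 1, which is what pins down the sample complexity.
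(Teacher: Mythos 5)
Your overall strategy is the same as the paper's: locate a pair of nodes on opposite sides of the cut that are close together, observe that every sample in the intersection of their $\epsilon$-balls is adjacent to both and hence contributes a crossing edge, and then convert the mass of that region (bounded below via the thickness clause of Assumption~\ref{assumption:basic}) into a sample count using a uniform ball-concentration bound (the paper uses Lemma~\ref{ball_bounds} of \citet{chaudhuri2010rates}, which plays the role of your VC step; your Voronoi/connectedness argument for producing the straddling pair is a slightly different phrasing of the paper's covering argument but is equivalent in substance).

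The genuine gap is exactly where you flagged the "main obstacle," and your proposed repair does not close it. With fixed radii ($\epsilon/4$ for nearest-sample, $\epsilon/2$ for the capturing ball) your count is at best $\tfrac12\, n\,\lambda_C\rho v_D(\epsilon/2)^D = 2^{-(D+1)}\lambda_C\rho v_D\epsilon^D n$, which is smaller than the claimed $\tfrac14\lambda_C\rho v_D\epsilon^D n$ by a factor $2^{D-1}$ -- a dimension-dependent loss, not a constant one, and this constant matters downstream since it enters the admissible sampling rate in Theorem~\ref{theorem:main}. The fix of "centering the capturing ball at a node and taking radius $\epsilon$" fails: nodes in $B(p,\epsilon)$ lying on the same side as $p$ are adjacent to $p$ but need not be adjacent to any node of the other side (the other-side witness $q$ can be up to $2\epsilon$ away from them), so they contribute no crossing edge. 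The missing idea is to let the straddle distance shrink with $n$: the sample-size hypothesis guarantees that every point of $C_i$ has a sample within $r_0 := \bigl(16\log(2/\delta)\, D\log n /(\lambda_C\rho\, n)\bigr)^{1/D}$ (its ball has mass $\ge C_{\delta,n}\sqrt{D\log n}/n$, so Lemma~\ref{ball_bounds} applies), so the covering-plus-connectedness argument produces $x\in S$, $x'\in S^C$ with $|x-x'|\le 2r_0 = o(\epsilon)$. Then $B(x,\epsilon-2r_0)\subseteq B(x,\epsilon)\cap B(x',\epsilon)$, its mass is at least $\lambda_C\rho v_D(\epsilon-2r_0)^D\ge \tfrac12\lambda_C\rho v_D\epsilon^D$ for $n$ as large as assumed, and one more application of Lemma~\ref{ball_bounds} converts this to an empirical count of at least $\tfrac14\lambda_C\rho v_D\epsilon^D n$, which is the dimension-free constant the lemma asserts. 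With that substitution (shrinking $r_0$ in place of your fixed $\epsilon/4$ and $\epsilon/2$), the rest of your outline goes through.
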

We next show that if $\text{MinPts}$ is set appropriately, we will with high probability determine which samples belong to clusters and which ones don't in the sampled graph.
\begin{lemma}\label{lemma:corepoints}
Suppose that Assumption~\ref{assumption:basic} holds and $\epsilon < \min\{R_0, R_S\}$.  Let $\delta, \gamma > 0$. There exists a universal constant $C$ such that the following holds.
Let the sampling rate be $s$ and suppose
\begin{align*}
  \lambda_N  \cdot v_D \cdot \epsilon^D  < \frac{\text{minPts}}{sn}< \rho \cdot \lambda_C \cdot v_D \cdot \epsilon^D,
\end{align*}
and $sn \ge \frac{C}{\Delta^2} \cdot \log^{1 + \gamma}\left( \frac{1}{\Delta}\right)\cdot \log(1/\delta)$, where $\Delta := \min\{\frac{\text{minPts}}{sn} - \lambda_N  \cdot v_D \cdot \epsilon^D, \rho \cdot \lambda_C \cdot v_D \cdot \epsilon^D - \frac{\text{minPts}}{sn}\}$.
Then with probability at least $1 - \delta$, all samples in $C_i$ for some $i \in [\ell]$ are identified as core-points and the rest are noise points.

\end{lemma}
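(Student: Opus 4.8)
The plan is to control, for each data point $x$, the ball mass $\mu(x) := \mathcal{P}(B(x,\epsilon)) = \int_{B(x,\epsilon)} p$, to show that the degree of $x$ in the subsampled graph built in Algorithm~\ref{alg:dbscan_subsampled} concentrates around $sn\cdot\mu(x)$, and to observe that the hypothesis $\lambda_N v_D\epsilon^D < \text{minPts}/(sn) < \rho\lambda_C v_D\epsilon^D$ puts every core point's mass strictly above, and every non-core point's mass strictly below, the threshold $\text{minPts}/(sn)$, with a gap of at least $\Delta$ on each side. Since the concentration step will deliver a deviation of order $sn\Delta/2$, this separates the two cases, and a union bound over the $n$ points finishes the argument.

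First I would record the two mass estimates coming from Assumption~\ref{assumption:basic}. If $x \in C_i$: since $p \ge \lambda_C$ on $C_i$ and $\epsilon < R_0$, the thickness condition gives $\mu(x) \ge \lambda_C\cdot\text{Volume}(B(x,\epsilon)\cap C_i) \ge \rho\lambda_C v_D\epsilon^D$, which by hypothesis exceeds $\text{minPts}/(sn)$ by at least $\Delta$. If $x$ is in the support of $p$ but in no $C_i$: since $p = 0$ outside $\bigcup_i C_i \cup N$, the point lies in $N$; because $\epsilon < R_S$ and $N$ is $R_S$-separated from every cluster, $B(x,\epsilon)$ misses all $C_i$, so $\mu(x) = \int_{B(x,\epsilon)\cap N} p \le \lambda_N v_D\epsilon^D$, which lies below $\text{minPts}/(sn)$ by at least $\Delta$. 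With probability one all data points lie in the support of $p$, so these cases are exhaustive.

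Next is the concentration step, which has two layers of randomness. Conditionally on $X_{[n]}$, the neighbors that a vertex $x_k$ samples for itself form a count that is, up to the negligible effect of repeated draws, $\mathrm{Binomial}(\lceil sn\rceil, \hat\mu(x_k))$ with $\hat\mu(x_k) := n^{-1}|X_{[n]}\cap B(x_k,\epsilon)|$; a Bernstein/Chernoff bound keeps this count within $sn\Delta/4$ of its mean $\lceil sn\rceil\hat\mu(x_k)$ except on an event of probability exponentially small in $sn\Delta^2$. A VC-type uniform deviation inequality for the class of Euclidean balls (VC dimension $O(D)$) then lets one replace $\hat\mu(x_k)$ by $\mu(x_k)$ at the cost of another $\Delta/4$-order term. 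The stated requirement $sn \ge C\Delta^{-2}\log^{1+\gamma}(1/\Delta)\log(1/\delta)$ is exactly what makes both deviations at most $sn\Delta/2$ in total, simultaneously over all $n$ vertices, with failure probability at most $\delta$; the $\log^{1+\gamma}$ factor, rather than a naive $\log n$, appears because the union bound is stratified over dyadic scales of the mass with a summable sequence of per-scale failure probabilities, so $n$ never enters. On the resulting good event, every $x \in C_i$ has degree at least $sn(\rho\lambda_C v_D\epsilon^D - \Delta/2) > \text{minPts}$, and every non-core $x$ has degree at most $sn(\lambda_N v_D\epsilon^D + \Delta/2) < \text{minPts}$, which is the assertion.

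The main obstacle is exactly this uniformity: the degree estimate must hold at all $n$ vertices at once, yet the bound on $sn$ should involve only $\log(1/\delta)$ and $\log(1/\Delta)$ and not $\log n$. The VC inequality for balls handles the uniformity in $X_{[n]}$ (it holds for all centers at once), and the $\log^{1+\gamma}$ device absorbs the scale-by-scale union bound. Two further points need care. First, one must route through the empirical mass $\hat\mu$ rather than $\mathcal{P}$ directly, because the subsample is drawn from $X_{[n]}$, not from $\mathcal{P}$. Second, the graph is undirected, so a vertex's degree also counts edges created when other vertices sampled it; this only helps the lower bound for core points, and for the upper bound at non-core points it is still governed by the ball mass $\mathcal{P}(B(x,\epsilon))$ and controlled by the same concentration estimate, since every such edge joins the vertex to a point within $\epsilon$. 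The remaining work is routine Chernoff and union-bound bookkeeping.
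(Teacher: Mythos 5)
Your overall skeleton is the same as the paper's: lower-bound the ball mass $\mathcal{P}(B(x,\epsilon))$ by $\rho\lambda_C v_D\epsilon^D$ for cluster points and upper-bound it by $\lambda_N v_D\epsilon^D$ for noise points (using $\epsilon<R_S$, which also rules out border points), then show each sampled degree concentrates around $sn\cdot\mathcal{P}(B(x,\epsilon))$ and compare to the threshold $\text{minPts}$. The paper does the concentration step in one shot: a per-point Hoeffding bound against the true mass with failure probability $\delta/n$, union-bounded over the $n$ samples, giving a deviation of $\sqrt{(\log n+\log(1/\delta))/(2sn)}$. Your two-layer version (Binomial concentration conditional on $X_{[n]}$ against the empirical mass, plus a VC bound for balls to pass from empirical to true mass) is if anything a more careful accounting of the two sources of randomness than the paper's one-step argument, and that part is fine.

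The genuine gap is your claim that the union over the $n$ vertices can be avoided, so that only $\log^{1+\gamma}(1/\Delta)\log(1/\delta)$ and never $\log n$ enters, via a ``dyadic stratification over mass scales.'' That mechanism does not work for this step. Conditionally on $X_{[n]}$, the subsampled neighbor counts of the $n$ vertices are independent Binomial$(\lceil sn\rceil,\hat\mu(x_k))$ variables, and every one of them must land on the correct side of $\text{minPts}$; for a vertex whose ball mass sits within order $\Delta$ of the threshold (which Assumption~\ref{assumption:basic} permits for $\Theta(n)$ vertices simultaneously, e.g.\ all cluster points with mass exactly $\rho\lambda_C v_D\epsilon^D$), the per-vertex failure probability is of order $\exp(-c\,sn\Delta^2)$. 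Since these events are independent, the probability that all $n$ succeed forces $sn\gtrsim(\log n+\log(1/\delta))/\Delta^2$; there is no VC structure or scale decomposition to exploit, because the relevant randomness is the per-vertex subsampling noise, not the empirical measure of a class of balls, and a single mass scale can still contain $\Theta(n)$ independent events. Stratification with summable per-scale budgets therefore cannot remove the $\log n$. (The paper's own proof in fact pays this $\log n$ explicitly in its deviation term, even though the lemma's displayed hypothesis omits it; you should either carry the $\log n$ as the paper's proof does, or note the regime in which it is absorbed, rather than assert it disappears.) The remaining bookkeeping in your outline, including the remark that incoming sampled edges only help the core-point lower bound, matches the paper's treatment.
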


The next result shows a rate at which we can sample a graph while still have it be connected, which depends on the min-cut and the size of the graph. It follows from classical results in graph theory that cut sizes remain preserved  under sampling \cite{karger1999random}.
\begin{lemma}\label{lemma:karger}
There exists universal constant $C$ such that the following holds. Let $G$ be a graph with min-cut $m$ and $0 < \delta < 1$. If 
\begin{align*}
    s \ge \frac{C\cdot (\log(1/\delta) + \log(n))}{m},
\end{align*}
then with probability at least $1 - \delta$, the graph $G_s$ obtained by sampling each edge of $G$ with probability $s$ is connected.
\end{lemma}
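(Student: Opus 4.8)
The plan is to reduce to a union bound over all cuts of $G$ and control that sum using Karger's bound on the number of small cuts; this is the standard route for sampling statements of this type. Observe first that $G_s$ is disconnected exactly when there is a nonempty proper $S \subseteq V$ such that \emph{none} of the edges of $G$ crossing $(S, V\backslash S)$ is retained in $G_s$. Since each edge is kept independently with probability $s$ and $1 - s \le e^{-s}$, a cut of value $c$ in $G$ contributes to its own ``no surviving edge'' event probability $(1-s)^c \le e^{-sc}$. Hence, writing $\text{cut}(S) := \text{Cut}_G(S, V\backslash S)$ and summing over all nonempty proper $S$ (with $S$ and $V\backslash S$ identified),
\[
  \pr{G_s \text{ disconnected}} \;\le\; \sum_{S} (1-s)^{\text{cut}(S)} \;\le\; \sum_{S} e^{-s\,\text{cut}(S)}.
\]

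The naive bound of $2^{n}$ on the number of terms is far too weak here — it would only yield the useless requirement $sm \gtrsim n$. The key input, which is the classical fact we invoke, is Karger's cut-counting bound \cite{karger1999random}: in a graph with min-cut $m$, for every real $\alpha \ge 1$ the number of cuts of value at most $\alpha m$ is at most $n^{2\alpha}$. I would exploit this by bucketing cuts according to their value: for each integer $i \ge 1$ let $A_i$ be the set of cuts whose value lies in $[\,i\,m,\,(i+1)\,m\,)$. Then the cuts in $A_1 \cup \dots \cup A_i$ all have value at most $(i+1)m$, so $\sum_{j\le i}|A_j| \le n^{2(i+1)}$, and in particular $|A_i| \le n^{2(i+1)}$. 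Since every cut has value at least $m$, every cut lies in some $A_i$ with $i \ge 1$, so
\[
  \pr{G_s \text{ disconnected}} \;\le\; \sum_{i\ge 1} |A_i|\, e^{-s i m} \;\le\; \sum_{i\ge 1} n^{2(i+1)} e^{-sim} \;=\; n^2 \sum_{i\ge 1}\paren{n^2 e^{-sm}}^{i}.
\]

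It then remains only to pick $C$. With $s \ge C\,(\log(1/\delta) + \log n)/m$ (natural logs, the change of base absorbed into $C$) we get $sm \ge C\log(n/\delta)$, hence $n^2 e^{-sm} \le n^2 (n/\delta)^{-C} = \delta^{C} n^{2-C}$, which for $C = 5$ and $n \ge 2$ is at most $\delta^5 n^{-3} \le 1/8$; summing the resulting geometric series then bounds the disconnection probability by $\tfrac{8}{7}\,\delta^5 n^{-1} \le \delta$. The corner cases $n = 1$ and $s = 1$ are trivially connected (note $m \ge 1$ already forces $G$ itself to be connected), so the conclusion holds in general. I expect no genuine obstacle beyond invoking the cut-counting bound correctly: the only points requiring care are that the naive union bound over all $2^n$ cuts must be replaced by the value-bucketed sum, and that the universal constant in the geometric series is chosen large enough that the total stays below $\delta$.
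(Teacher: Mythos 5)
Your proof is correct, and it is a genuinely more self-contained route than the paper's, which simply invokes Theorem~2.1 of \cite{karger1999random} as a black box: that theorem says all cuts of the sampled graph concentrate within $(1\pm\varepsilon)$ of their expectations, and connectivity follows because the expected value of every cut is at least $sm = \Omega(\log(n/\delta)) > 0$. You instead prove only what is needed — that no cut of $G$ is completely emptied by the sampling — via a union bound in which cuts are bucketed by value and counted with Karger's cut-counting bound (at most $n^{2\alpha}$ cuts of value at most $\alpha m$); this is exactly the machinery underlying the cited theorem, so the two arguments share their key ingredient, but yours avoids the Chernoff-type concentration step entirely and makes the constant $C$ explicit. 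The details check out: the bucket $A_i$ of cuts with value in $[im,(i+1)m)$ indeed satisfies $|A_i|\le n^{2(i+1)}$, each such cut survives-empty with probability at most $e^{-sim}$, and with $sm \ge C\log(n/\delta)$, $C=5$, $n\ge 2$ the geometric series is bounded by $\tfrac{8}{7}\delta^{5}n^{-1}\le\delta$; the degenerate cases ($n=1$, $s\ge 1$, and the implicit requirement $m\ge 1$ which forces $G$ itself to be connected) are correctly dispatched. The trade-off is that the paper's citation is shorter and inherits the stronger cut-preservation guarantee (useful if one later wants approximate cut values, not just connectivity), while your argument is elementary, self-contained modulo the cut-counting lemma, and gives exactly the stated connectivity claim with explicit constants.
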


We now give the final result, which follows from combining Lemmas~\ref{lemma:mincut_neighborhood},~\ref{lemma:corepoints}, and~\ref{lemma:karger}.
\begin{theorem}\label{theorem:main}
Suppose that Assumption~\ref{assumption:basic} holds and $\epsilon < \min\{R_0, R_S\}$.  Let $\delta, \gamma > 0$. There exist universal constants $C_1, C_2$  and constant $C_{D, p}$ depending only on $D$ and $p$ such that the following holds. Suppose 
\begin{align*}
  \lambda_N  \cdot v_D \cdot \epsilon^D  < \frac{\text{minPts}}{sn}< \rho \cdot \lambda_C \cdot v_D \cdot \epsilon^D,\hspace{0.5cm}
    s \ge \frac{C_1 \cdot (\log(1/\delta) + \log n)}{\lambda_C\cdot \rho \cdot v_D \cdot \epsilon^D \cdot n},
\end{align*}
and
\begin{align*}
    n \ge \max\left\{C_{D, p} \cdot \log(2/\delta)\cdot \left(\frac{1}{\epsilon}\right)^D \cdot \log^{1 + \gamma}\left(\frac{1}{\epsilon}\right),  \frac{1}{s}\frac{C_2}{\Delta^2} \cdot \log^{1 + \gamma}\left( \frac{1}{\Delta}\right)\cdot \log(1/\delta)\right\},
\end{align*}
where $\Delta := \min\{\frac{\text{minPts}}{sn} - \lambda_N  \cdot v_D \cdot \epsilon^D, \rho \cdot \lambda_C \cdot v_D \cdot \epsilon^D - \frac{\text{minPts}}{sn}\}$.
 
Then with probability at least $1 - 3\delta$, Algorithm~\ref{alg:dbscan_subsampled} returns (up to permutation) the clusters $\{C_1\cap X_{[n]}, C_2\cap X_{[n]}, ...,C_\ell\cap X_{[n]}\}$.
\end{theorem}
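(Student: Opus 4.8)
The plan is to chain together the three lemmas in the obvious way, but the bookkeeping has to be done carefully so that all the probabilistic events line up. First I would observe that the hypotheses of the theorem include, verbatim, the hypotheses of Lemma~\ref{lemma:corepoints} (the two-sided bound on $\text{minPts}/(sn)$, and the lower bound $n \ge \frac{1}{s}\cdot\frac{C_2}{\Delta^2}\log^{1+\gamma}(1/\Delta)\log(1/\delta)$, which is exactly $sn \ge \frac{C_2}{\Delta^2}\log^{1+\gamma}(1/\Delta)\log(1/\delta)$). So on an event $E_1$ of probability at least $1-\delta$, every sample lying in some $C_i$ is flagged as a core-point in the sampled graph $G$, and every other sample is flagged as noise. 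Note here that the degrees in the sampled graph are $\sum$ of $\lceil sn\rceil$ Bernoulli-type indicators, so ``core-point in $G$'' is a statement about the subsampled neighborhood relation, which is precisely what Lemma~\ref{lemma:corepoints} controls; I would make sure to cite it in that subsampled form.

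Second, condition on $E_1$. Then the set $\mathcal{K}$ of connected components of the core-point subgraph of $G$ is exactly the set of connected components of the subgraph of $G$ induced by $\bigcup_i (C_i\cap X_{[n]})$. Since $\epsilon < R_S$, no edge of the full $\epsilon$-neighborhood graph — and hence no edge of $G$ — can join a point of $C_i$ to a point of $C_j$ for $i\ne j$, nor a point of a cluster to a noise point; so the core-point subgraph of $G$ is the disjoint union, over $i\in[\ell]$, of the sampled subgraph $G_{n,\epsilon}(i)_s$ on the vertex set $C_i\cap X_{[n]}$. It therefore suffices to show each $G_{n,\epsilon}(i)_s$ is connected; then $\mathcal{K}=\{C_1\cap X_{[n]},\dots,C_\ell\cap X_{[n]}\}$ up to relabeling, and the final border-point assignment step of Algorithm~\ref{alg:dbscan_subsampled} cannot move any point (noise points are at distance $>\epsilon$ from every cluster point, so they are not connected to any $K_i$ and remain unclustered; cluster points are already in their $K_i$). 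This yields the claimed output.

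Third, to get connectivity of each $G_{n,\epsilon}(i)_s$: apply Lemma~\ref{lemma:mincut_neighborhood} on an event $E_2$ of probability at least $1-\delta$, giving $\text{MinCut}(G_{n,\epsilon}(i)) \ge \frac14\lambda_C\rho v_D\epsilon^D n$ for all $i$ simultaneously; this uses the hypothesis $n \ge C_{D,p}\log(2/\delta)(1/\epsilon)^D\log^{1+\gamma}(1/\epsilon)$ and $\epsilon<R_0$. Then the sampling-rate hypothesis $s \ge C_1(\log(1/\delta)+\log n)/(\lambda_C\rho v_D\epsilon^D n)$ ensures, after adjusting the constant $C_1$ to absorb the $\tfrac14$, that $s \ge C(\log(1/\delta)+\log|C_i\cap X_{[n]}|)/\text{MinCut}(G_{n,\epsilon}(i))$ for each $i$ — using $|C_i\cap X_{[n]}|\le n$ — so Lemma~\ref{lemma:karger} applies to each of the $\ell$ subgraphs. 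The one subtlety is that the edge-sampling model in Algorithm~\ref{alg:dbscan_subsampled} is not literally ``include each edge independently with probability $s$'' but rather ``for each vertex, draw $\lceil sn\rceil$ neighbors with replacement''; I would note that an edge $(x,x')$ is included whenever $x$ draws $x'$ or $x'$ draws $x$, so each edge is present independently with probability at least $s$ (in fact $\approx 1-(1-1/n)^{2\lceil sn\rceil}\gtrsim s$ for the relevant regime), and since adding edges only helps connectivity, Lemma~\ref{lemma:karger} still applies with the same rate up to a constant. I expect this reconciliation of the sampling model — not the min-cut or the Karger step themselves — to be the only real friction point; everything else is a union bound over the events $E_1, E_2$ and the $\ell$ invocations of Lemma~\ref{lemma:karger}, giving a total failure probability of at most $\delta + \delta + \delta = 3\delta$.
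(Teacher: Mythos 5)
Your proposal is correct and follows essentially the same route as the paper, which gives no separate argument for Theorem~\ref{theorem:main} beyond stating that it follows by combining Lemmas~\ref{lemma:mincut_neighborhood}, \ref{lemma:corepoints}, and~\ref{lemma:karger} with a union bound. In fact you are more careful than the paper on the one genuine friction point — reconciling the per-vertex sampling of $\lceil sn\rceil$ neighbors in Algorithm~\ref{alg:dbscan_subsampled} with the independent edge-sampling model of Lemma~\ref{lemma:karger} — which the paper silently elides.
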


\begin{remark}
As a consequence, we can take $s = \Theta(\log n / \epsilon^D n)$, and, for $n$ sufficiently large, the clusters are recovered with high probability, leading to a computational complexity of $O(\frac{1}{\epsilon^D} \cdot n \log n)$ for Algorithm~\ref{alg:dbscan_subsampled}.
\end{remark}

\subsection{Recovering Clusters of a Particular Level of the Density}

 We show level-set estimation rates for estimating a particular level $\lambda$ (i.e. $L_f(\lambda) := \{ x\in\mathcal{X} : f(x) \ge \lambda\}$) given that hyperparameters of SNG-DBSCAN are set appropriately depending on density $f$, $s$, $\lambda$ and $n$ under the following assumption which characterizes the regularity of the density function around the boundaries of the level-set. This assumption is standard amongst analyses in level-set estimation e.g. \cite{jiang2017density,singh2009adaptive}.
 
\begin{assumption}\label{assumption:level} $f$ is a uniformly continuous density on compact set $\mathcal{X} \subseteq \mathbb{R}^D$. There exists $\beta, \widecheck{C}, \widehat{C}, r_c > 0$ such that the following holds for all $x \in B(L_f(\lambda), r_c) \backslash L_f(\lambda)$:
    $\widecheck{C} \cdot d(x, L_f(\lambda))^{\beta} \le \lambda - f(x) \le \widehat{C} \cdot d(x, L_f(\lambda))^{\beta}$, where  $d(x, A) := \inf_{x' \in A} |x - x'|$ and $B(A, r) := \{ x : d(x, A) \le r\}$.
\end{assumption}

\begin{theorem} \label{theo:levelset}
Suppose that Assumption~\ref{assumption:level} holds, $0 < \delta < 1$, and $\epsilon$ and $\text{MinPts}$ satisfies the following:
\begin{align*}
    \epsilon^\beta = \frac{1}{\widehat{C}} \left(\lambda - \frac{\text{MinPts}}{v_D \cdot \epsilon^D \cdot s\cdot n} - \sqrt{\frac{\log(4n) + \log(1/\delta)}{s\cdot n}}\right).
\end{align*}
Let $\widehat{L_f(\lambda)}$ be the union of all the clusters returned by SNG-DBSCAN.
Then, for $n$ sufficiently large depending on $f, \epsilon, \text{MinPts}$, the following holds with probability at least $1-\delta$:
\begin{align*}
    d_{\text{Haus}}(\widehat{L_f(\lambda)}, L_f(\lambda)) \le C\cdot \left(\left(\frac{\log(4n) + \log(1/\delta)}{s\cdot n} \right)^{1/2\beta} + \epsilon\right),
\end{align*}
where $C$ is some constant depending on $f$ and $d_{\text{Haus}}$ is Hausdorff distance.
\end{theorem}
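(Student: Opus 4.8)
\emph{Proof plan.} Abbreviate $\tau := \frac{\text{MinPts}}{v_D\,\epsilon^D\, s\, n}$, the density level separating core-points from the rest (a sample is a core-point precisely when its degree in $G$ is at least $\text{MinPts} = \tau\cdot v_D\epsilon^D s n$), and $\xi_n := \sqrt{\frac{\log(4n)+\log(1/\delta)}{sn}}$, so the hypothesis on $\epsilon$ reads $\widehat{C}\,\epsilon^\beta = \lambda - \tau - \xi_n$. Since $d_{\text{Haus}}(\widehat{L_f(\lambda)},L_f(\lambda)) \le r$ is the conjunction of $\widehat{L_f(\lambda)}\subseteq B(L_f(\lambda),r)$ and $L_f(\lambda)\subseteq B(\widehat{L_f(\lambda)},r)$, the plan is to prove each inclusion with $r$ of order $\xi_n^{1/\beta}+\epsilon$. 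Unlike Theorem~\ref{theorem:main}, the statement concerns only the \emph{union} of the returned clusters, so Lemma~\ref{lemma:karger} and the min-cut bound play no role here; only which samples end up clustered matters.

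The engine is a uniform concentration bound: with probability at least $1-\delta$, every sample $x_i$ satisfies $\left| \tfrac{1}{sn}\deg_G(x_i) - \pr{B(x_i,\epsilon)}\right| \le \xi_n$. One proves this by conditioning on $X_{[n]}$, viewing the $\lceil sn\rceil$ neighbour draws made for each point as i.i.d. samples from the empirical distribution, applying a Bernstein/Chernoff bound to control the sampled neighbour count around the empirical mass of $B(x_i,\epsilon)$, controlling that empirical mass around $\pr{B(x_i,\epsilon)}$, and union-bounding over the $n$ samples (the $4n$ inside the log absorbs the two stages and the two-sided estimate; since $sn\le n$ the subsampling fluctuation dominates). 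Uniform continuity of $f$ gives, uniformly in $x$, $\pr{B(x,\epsilon)} = v_D\epsilon^D f(x) + v_D\epsilon^D\,o_\epsilon(1)$, with $o_\epsilon(1)$ the modulus of continuity of $f$; on the good event, and for $n$ large so this modulus term is dominated, $\tfrac{1}{sn}\deg_G(x_i)$ thus equals $v_D\epsilon^D f(x_i)$ up to an additive error $\asymp \xi_n$.

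For $\widehat{L_f(\lambda)}\subseteq B(L_f(\lambda),r)$: every clustered point is a core-point or is joined by an edge of $G$ (hence within $\epsilon$) to a core-point, so it suffices to bound $t := d(x_i,L_f(\lambda))$ for core-points $x_i$. A core-point has $\pr{B(x_i,\epsilon)} \ge \tfrac{\text{MinPts}}{sn}-\xi_n = \tau v_D\epsilon^D-\xi_n$. On the other hand, if $t>\epsilon$ then every $x'\in B(x_i,\epsilon)$ has $d(x',L_f(\lambda))\ge t-\epsilon$, so Assumption~\ref{assumption:level} (valid once $t-\epsilon<r_c$; larger $t$ is excluded outright, since there the density is bounded away from $\lambda$ by compactness) gives $f(x')\le \lambda-\widecheck{C}(t-\epsilon)^\beta$, whence $\pr{B(x_i,\epsilon)}\le v_D\epsilon^D(\lambda-\widecheck{C}(t-\epsilon)^\beta)$. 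Comparing the two estimates and substituting $\tau = \lambda-\widehat{C}\epsilon^\beta-\xi_n$ yields $\widecheck{C}(t-\epsilon)^\beta \le \widehat{C}\epsilon^\beta + O(\xi_n/(v_D\epsilon^D))$, so $t = O(\epsilon+\xi_n^{1/\beta})$; points with $t\le\epsilon$ are fine trivially, and the extra $\epsilon$ from border points does not change the order.

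For $L_f(\lambda)\subseteq B(\widehat{L_f(\lambda)},r)$ one shows every $y\in L_f(\lambda)$ has a core-point sample within $O(\epsilon)$ of it. Compactness of $L_f(\lambda)$ together with a lower bound on $f$ on a neighbourhood of $L_f(\lambda)$ lets a union bound over a fine net show that, with probability $\ge 1-\delta$ and for $n$ large (this is where "$n$ sufficiently large depending on $f,\epsilon$" enters), every point of $L_f(\lambda)$ has a sample $x_i$ within a prescribed $\eta=\eta(\epsilon)$; such an $x_i$ is a core-point because $B(y,\epsilon-\eta)\subseteq B(x_i,\epsilon)$, on $B(y,\epsilon)$ one has $f\ge\lambda-\widehat{C}\epsilon^\beta$ by the upper branch of Assumption~\ref{assumption:level}, and — using that a constant fraction of $B(y,\epsilon)$ lies inside $L_f(\lambda)$, where $f\ge\lambda$ — the resulting refined lower bound on $\pr{B(x_i,\epsilon)}$ exceeds $\tau v_D\epsilon^D+\xi_n$ for $n$ large, so $\tfrac{1}{sn}\deg_G(x_i)\ge\pr{B(x_i,\epsilon)}-\xi_n\ge \tfrac{\text{MinPts}}{sn}$ and $x_i$ is clustered. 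Combining the two inclusions and renaming constants gives the bound. \textbf{The main obstacle is exactly this last point}: the crude estimate $f\ge\lambda-\widehat{C}\epsilon^\beta$ over all of $B(y,\epsilon)$ falls short of the threshold by precisely the slack $\xi_n(1-v_D\epsilon^D)$, so one must exploit that a positive fraction of the $\epsilon$-ball around a boundary point of $L_f(\lambda)$ sits strictly inside the level set; making this quantitative — identifying the regularity of $L_f(\lambda)$ that Assumption~\ref{assumption:level} and compactness actually provide — is the crux, and is also why the bound keeps the non-vanishing term of order $\epsilon$.
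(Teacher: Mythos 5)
Your overall architecture tracks the paper's proof: a uniform concentration bound on the subsampled degrees over all $n$ samples, the lower branch of Assumption~\ref{assumption:level} to force every clustered (core or border) point to within $O\bigl(\epsilon + \xi_n^{1/\beta}\bigr)$ of $L_f(\lambda)$ where $\xi_n := \sqrt{(\log(4n)+\log(1/\delta))/(sn)}$, and the upper branch plus a sample-in-every-ball argument for the reverse inclusion; you are also right that Lemma~\ref{lemma:karger} and the min-cut bound play no role here.

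The genuine gap is at the step you yourself flag as the crux, and your proposed repair does not work. The claim that ``a constant fraction of $B(y,\epsilon)$ lies inside $L_f(\lambda)$, where $f\ge\lambda$'' is not provided by Assumption~\ref{assumption:level}: that assumption only controls the decay of $\lambda-f(x)$ \emph{outside} the level set and gives no inner regularity whatsoever, so $L_f(\lambda)$ may have arbitrarily small (even zero) volume near a boundary point, and the refined lower bound you need can fail. Moreover, no such geometric input is needed, because the obstacle you identified is an artifact of scale mismatch rather than a real deficit. The paper works on the density scale with $\widehat f(x):=\deg_G(x)/(v_D\epsilon^D sn)$ and $f_\epsilon$, and the hypothesis on $\epsilon$ is exactly the identity $\lambda-\widehat C\epsilon^\beta-\xi_n=\tau$ with $\tau:=\text{MinPts}/(v_D\epsilon^D sn)$; it then shows core-point-ness only for samples lying \emph{in} $L_f(\lambda)\cap X_{[n]}$, for which $\widehat f(x)\ge f_\epsilon(x)-\xi_n\ge \lambda-\widehat C\epsilon^\beta-\xi_n=\tau$, i.e.\ the crude bound meets the threshold with equality by design (the reverse inclusion is then finished by showing, via Lemma~\ref{ball_bounds}, that every $x\in L_f(\lambda)$ has a sample within $\epsilon$, rather than via your $\eta$-net with a nearby sample, which additionally incurs a $(\epsilon+\eta)^\beta$ slack). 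The shortfall $\xi_n(1-v_D\epsilon^D)$ you computed comes from measuring the Hoeffding fluctuation on the unnormalized $\mathcal{P}(B(x,\epsilon))$ scale while the hypothesis calibrates the slack on the density scale; an honest Hoeffding bound for $\widehat f$ has deviation of order $\xi_n/(v_D\epsilon^D)$, so the correct fix is to carry that $\epsilon$-dependent constant into the slack term of the calibration (or absorb it into the constants, which is what the paper implicitly does), not to invoke interior volume of the level set. With that recalibration your plan closes along the same lines as the paper and yields the stated Hausdorff rate up to the constant $C$.
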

Given these results, they can be extended them to obtain clustering results with the same convergence rates (i.e. showing that SNG-DBSCAN recovers the connected components of the level-set individually) in a similar way as shown in the previous result.
\section{Experiments}

{\bf Datasets and hyperparameter settings}: We compare the performance of SNG-DBSCAN against DBSCAN on 5 large (\textasciitilde 1,000,000+ datapoints) and 12 smaller (\textasciitilde100 to \textasciitilde100,000 datapoints) datasets  from UCI \cite{Dua:2019} and \href{https://www.openml.org/}{OpenML} \cite{OpenML2013}. Details about each large dataset shown in the main text are summarized in Figure~\ref{fig:datasetsummary_large}, and the rest, including all hyperparameter settings, can be found in the Appendix. Due to space constraints, we couldn't show the results for all of the datasets in the main text.
The settings of MinPts and range of $\epsilon$ that we ran SNG-DBSCAN and DBSCAN on, as well as how $s$ was chosen, are shown in the Appendix. For simplicity we fixed MinPts and only tuned $\epsilon$ which is the more essential hyperparameter. We compare our implementation of SNG-DBSCAN to that of sci-kit learn's DBSCAN \cite{pedregosa2011scikit}, both of which are implemented with Cython, which allows the code to have a Python API while the expensive computations are done in a C/C++ backend.

{\bf Clustering evaluation}: To score cluster quality, we use two popular clustering scores: the Adjusted Rand Index (ARI) \cite{hubert1985comparing} and Adjusted Mutual Information (AMI) \cite{vinh2010information} scores. ARI is a measure of the fraction of pairs of points that are correctly clustered to the same or different clusters. AMI is a measure of the cross-entropy of the clusters and the ground truth. Both are normalized and adjusted for chance, so that the perfect clustering receives a score of 1 and a random one receives a score of $0$. The datasets we use are classification datasets where we cluster on the features and use the labels to evaluate our algorithm's clustering performance. It is standard practice to evaluate performance using the labels as a proxy for the ground truth \cite{jiang2018quickshift,jang2019dbscan}.

\subsection{Performance on Large Datasets}

\begin{wrapfigure}{r}{80mm}
\vspace{-1cm}
    \begin{tabular}{ |p{1.75cm}||p{1.5cm}|p{0.6cm}|p{0.3cm}|p{0.7cm}|p{0.6cm}| }
        \hline
        & $n$ & $D$ & $c$ & $s$ & mPts\\
        \hline\hline
        Australian & 1,000,000 & 14 & 2 & 0.001 & 10 \\
        \hline
        Still \cite{stisen2015smart} & 949,983 & 3 & 6 & 0.001 & 10 \\
        \hline
        Watch \cite{stisen2015smart} & 3,205,431 & 3 & 7 & 0.01 & 10 \\
        \hline
        Satimage & 1,000,000 & 36 & 6 & 0.02 & 10 \\
        \hline
        Phone \cite{stisen2015smart} & 1,000,000 & 3 & 7 & 0.001 & 10 \\
        \hline
    \end{tabular}
    \caption{\label{fig:datasetsummary_large}\textit{Summary of larger datasets and hyperparameter settings used.} Includes dataset size ($n$), number of features ($D$), number of clusters ($c$), and the fraction $s$ of neighborhood edges kept by SNG-DBSCAN. Datasets Phone, Watch, and Still come from the Heterogeneity Activity Recognition dataset. Phone originally had over 13M points, but in order to feasibly run DBSCAN, we used a random 1M sample. For the full chart, including settings for $\epsilon$, see the Appendix.
}
\end{wrapfigure}

We now discuss the performance on the large datasets, which we ran on a cloud compute environment. Due to computational costs, we only ran DBSCAN once for each $\epsilon$ setting. We ran SNG-DBSCAN 10 times for each $\epsilon$ setting and averaged the clustering scores and runtimes.  The results are summarized in Figure~\ref{fig:performance_large}, which reports the highest clustering scores for the respective algorithms along with the runtime and RAM usage required to achieve these scores.  We also plot the clustering performance and runtime/RAM usage across different settings of $\epsilon$ in Figure~\ref{fig:big_datasets}.

\begin{figure}[H]
    \begin{center}
        \begin{tabular}{ |p{1.6cm}||p{2.5cm}|p{2.5cm}||p{2.5cm}|p{2.5cm}|}
            \hline
            & Base ARI & SNG ARI & Base AMI & SNG AMI\\
            \hline\hline
            Australian & \textbf{0.0933} (6h 4m) & \textbf{0.0933} (1m 36s) & \textbf{0.1168} (4h 28m) & 0.1167 (1m 36s) \\
            & 282 GB & 1.5 GB & 146 GB & 1.5 GB \\
            \hline
            Still & 0.7901 (14m 37s) & \textbf{0.7902} (1m 21s) & 0.8356 (14m 37s) & \textbf{0.8368} (1m 57s) \\
            & 419 GB & 1.6 GB & 419 GB & 1.5 GB \\
            \hline
            Watch & 0.1360 (27m 2s) & \textbf{0.1400} (1h 27m) & 0.1755 (7m 35s) & \textbf{0.1851} (1h 29m) \\
            & 518 GB & 8.7 GB & 139 GB & 7.6 GB \\
            \hline
            Satimage & 0.0975 (1d 3h) & \textbf{0.0981} (33m 22s) & 0.1019 (1d 3h) & \textbf{0.1058} (33m 22s) \\
            & 11 GB & 2.1 GB & 11 GB & 2.1 GB \\
            \hline
            Phone & 0.1902 (12m 36s) & \textbf{0.1923} (59s) & 0.2271 (4m 48s) & \textbf{0.2344} (46s) \\
            & 138 GB & 1.7 GB & 32 GB & 1.5 GB \\
            \hline
        \end{tabular}
    \caption{\label{fig:performance_large}\textit{Performance on large datasets.} Best Adjusted Rand Index and Adjusted Mutual Information Scores for both DBSCAN and SNG-DBSCAN after $\epsilon$-tuning, shown with their associated runtimes and RAM used. Highest scores for each dataset are bolded. We see that SNG-DBSCAN is competitive against DBSCAN in terms of clustering quality while using much fewer resources. On Australian, we see an over 200x speedup. On Still, we see an over 250x reduction in RAM.}
    \end{center}
    \vspace{-0.5cm}
\end{figure}

\begin{figure}[H]
\begin{center}
\includegraphics[width=0.65\textwidth]{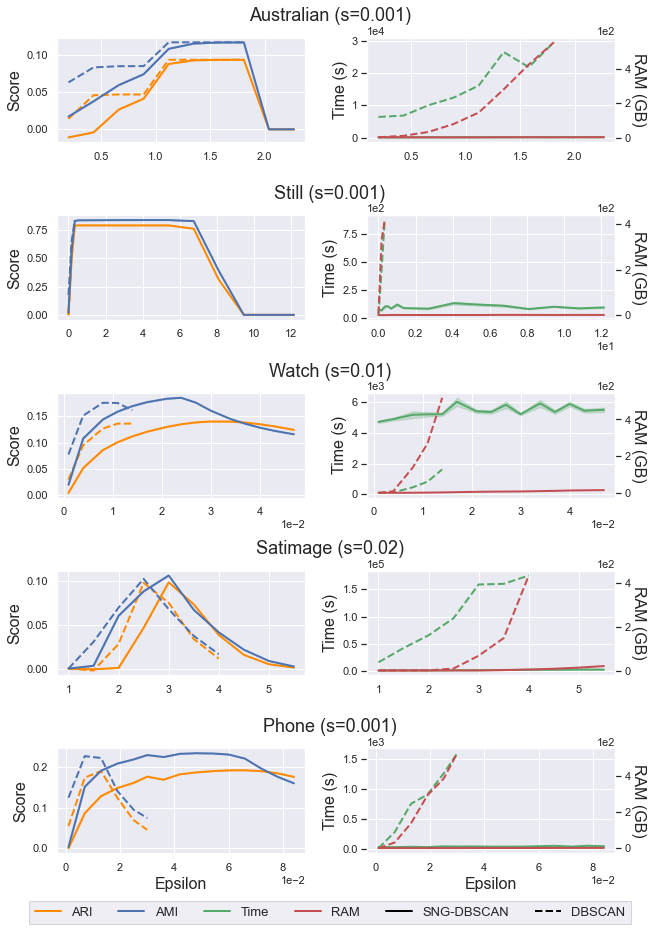}
\caption{{\bf Large dataset results across $\epsilon$}. We show the performance of SNG-DBSCAN and DBSCAN on five large datasets. SNG-DBSCAN values are averaged over 10 runs, and 95\% confidence intervals (standard errors) are shown. We ran these experiments on a cloud environment and plot the adjusted RAND index score, adjusted mutual information score, runtime, and RAM used across a wide range of $\epsilon$. DBSCAN was run with $\text{MinPts}=10$, and SNG-DBSCAN was run with $\text{MinPts}=\max(2, \left \lfloor{10\cdot s}\right \rfloor)$ for all datasets. No data is given for DBSCAN for $\epsilon$ settings requiring more than 750GB of RAM as it wasn't possible for DBSCAN to run on these machines.}\label{fig:big_datasets}
\end{center}
\vspace{-0.5cm}
\end{figure}

 From Figure~\ref{fig:big_datasets}, we see that DBSCAN often exceeds the 750GB RAM limit on our machines and fails to complete; the amount of memory (as well as runtime) required for DBSCAN escalates quickly as $\epsilon$ increases, suggesting that the size of the $\epsilon$-neighborhood graph grows quickly. Meanwhile, SNG-DBSCAN's memory usage remains reasonable and increases slowly across $\epsilon$. This suggests that SNG-DBSCAN can run on much larger datasets infeasible for DBSCAN, opening the possibilities for applications which may have previously not been possible due to scalability constraints -- all the while attaining competitive clustering quality (Figure~\ref{fig:performance_large}).

Similarly, SNG-DBSCAN shows a significant runtime improvement for almost all datasets and stays relatively constant across epsilon. We note that occasionally (e.g. Watch dataset for small $\epsilon$), SNG-DBSCAN is slower. This is likely because SNG-DBSCAN does not take advantage of space-partitioning data structures such as kd-trees. This is more likely on lower dimensional datasets where space-partitioning data structures tend to perform faster as the number of partitions is exponential in dimension \cite{bentley1975multidimensional}. Conversely, we see the largest speedup on Australian, which is the highest dimension large dataset we evaluated. However, DBSCAN was unable to finish clustering Watch past the first few values of $\epsilon$ due to exceeding memory limits. During preliminary experimentation, but not shown in the results, we tested DBSCAN on the UCI Character Font Images dataset which has dimension 411 and 745,000 datapoints. DBSCAN failed to finish after running on the cloud machine for over 6 days. Meanwhile, SNG-DBSCAN was able to finish with the same $\epsilon$ settings in under 15 mins using the $s=0.01$ setting. We didn't show the results here because we were unable to obtain clustering scores for DBSCAN.

\subsection{Performance on smaller datasets}

We now show that we don't require large datasets to enjoy the advantages of SNG-DBSCAN. Speedups are attainable for even the smallest datasets without sacrificing clustering quality. Due to space constraints, we provide the summary of the datasets and hyperparameter setting details in the Appendix. In Figure~\ref{fig:performance_small}, we show performance metrics for some datasets under optimal tuning. The results for the rest of the datasets are in the Appendix, where we also provide charts showing performance across $\epsilon$ and different settings of $s$ to better understand the effect of the sampling rate on cluster quality-- there we show that SNG-DBSCAN is stable in the $s$ hyperparameter. Overall, we see that SNG-DBSCAN can give considerable savings in computational costs while remaining competitive in clustering quality on smaller datasets.

\begin{figure}[H]
    \begin{center}
        \begin{tabular}{ |p{1.8cm}||p{2.5cm}|p{2.5cm}||p{2.5cm}|p{2.5cm}|}
            \hline
            & Base ARI & SNG ARI & Base AMI & SNG AMI\\
            \hline\hline
            Wearable & 0.2626 (55s) & \textbf{0.3064} (8.2s) & \textbf{0.4788} (26s) & 0.4720 (6.6s) \\
            \hline
            Iris & \textbf{0.5681} (<0.01s) & \textbf{0.5681} (<0.01s) & \textbf{0.7316} (<0.01s) & \textbf{0.7316} (<0.01s) \\
            \hline
            LIBRAS & 0.0713 (0.03s) & \textbf{0.0903} (<0.01s) & 0.2711 (0.02s) & \textbf{0.3178} (<0.01s) \\
            \hline
            Page Blocks & 0.1118 (0.38s) & \textbf{0.1134} (0.06s) & \textbf{0.0742} (0.49s) & 0.0739 (0.06s) \\
            \hline
            kc2 & 0.3729 (<0.01s) & \textbf{0.3733} (<0.01s) & \textbf{0.1772} (<0.01s) & 0.1671 (<0.01s) \\
             \hline
            Faces & 0.0345 (1.7s) & \textbf{0.0409} (0.16s) & 0.2399 (1.7s) & \textbf{0.2781} (0.15s) \\
            \hline
            Ozone & 0.0391 (<0.01s) & \textbf{0.0494} (<0.01s) & 0.1214 (<0.01s) & \textbf{0.1278} (<0.01s) \\
            \hline
            Bank & 0.1948 (3.4s) & \textbf{0.2265} (0.03s) & 0.0721 (3.3s) & \textbf{0.0858} (0.03s) \\
            \hline
        \end{tabular}
    \caption{\label{fig:performance_small}\textit{Clustering performance.} Best Adjusted Rand Index and Adjusted Mutual Information scores for both DBSCAN and SNG-DBSCAN under optimal tuning are given with associated runtimes in parentheses. Highest scores for each dataset are bolded. Full table can be found in the Appendix.}
    \end{center}
\end{figure}

\subsection{Performance against DBSCAN++}

We now compare the performance of SNG-DBSCAN against a recent DBSCAN speedup called DBSCAN++\cite{jang2019dbscan}, which proceeds by performing $k$-centers to select $\lfloor sn\rfloor$ candidate points, computing densities for these candidate points and using these densities to identify a subset of core-points, and finally clustering the dataset based on the $\epsilon$-neighborhood graph of these core-points. Like our method, DBSCAN++ also has $O(s n^2)$ runtime. SNG-DBSCAN can be seen as subsampling edges while DBSCAN++ subsamples the vertices. 

We show comparative results on our large datasets in Figure~\ref{fig:dbscanpp}. We ran DBSCAN++ with the same sampling rate $s$ as SNG-DBSCAN. We see that SNG-DBSCAN is competitive and beats DBSCAN++ on 70\% of the metrics. Occasionally, DBSCAN++ fails to produce reasonable clusters with the $s$ given, as shown by the low scores for some datasets. We see that SNG-DBSCAN is slower than DBSCAN++ for the low-dimensional datasets (i.e. Phone, Watch, and Still, which are all of dimension 3) while there is a considerable speedup on Australian. Like in the experiments on the large datasets against DBSCAN, this is due to the fact that DBSCAN++, like DBSCAN, leverages space-partitioning data structures such as kd-trees, which are faster in low dimensions. Overall, we see that SNG-DBSCAN is a better choice over DBSCAN++ when using the same sampling rate.

\begin{figure}[H]
    \begin{center}
        \begin{tabular}{ |p{1.8cm}||p{2.5cm}|p{2.5cm}||p{2.6cm}|p{2.5cm}|}
            \hline
            & DBSCAN++ ARI & SNG ARI & DBSCAN++ AMI & SNG AMI\\
            \hline\hline
            Australian & \textbf{0.1190} (1185s) & 0.0933 (96s) & 0.1166 (1178s) & \textbf{0.1167} (96s) \\
            & 1.6 GB & 1.5 GB & 1.6 GB & 1.5 GB \\
            \hline
            Satimage & 0.0176 (5891s) & \textbf{0.0981} (2002s) & \textbf{0.1699} (4842s) & 0.1058 (2002s) \\
            & 2.3 GB & 2.1 GB & 2.3 GB & 2.1 GB \\
            \hline
            Phone & 0.0001 (11s) & \textbf{0.1923} (59s) & 0.0023 (11s) & \textbf{0.2344} (46s) \\
            & 1.0 GB & 1.7 GB & 1.0 GB & 1.5 GB \\
            \hline
            Watch & 0.0000 (973s) & \textbf{0.1400} (5230s) & 0.0025 (1017s) & \textbf{0.1851} (5371s) \\
            & 1.4 GB & 8.7 GB & 1.4 GB & 7.6 GB \\
            \hline
            Still & 0.7900 (12s) & \textbf{0.7902} (81s) & \textbf{0.8370} (15s) & 0.8368 (117s) \\
            & 1.4 GB & 1.6 GB & 1.4 GB & 1.5 GB \\
            \hline
        \end{tabular}
    \caption{\label{fig:dbscanpp}\textit{SNG-DBSCAN against DBSCAN++\cite{jang2019dbscan} on large datasets tuned over $\epsilon$.}}
    \end{center}
\end{figure}

\begin{figure}[H]
\centering
\vspace{-0.5cm}
    \begin{tabular}{ |p{1.8cm}||p{2.5cm}|p{2.5cm}||p{2.6cm}|p{2.5cm}|}
        \hline
        & DBSCAN++ ARI & SNG ARI & DBSCAN++ AMI & SNG AMI\\
        \hline\hline
        Wearable & 0.2517 & \textbf{0.3263} & 0.4366 & \textbf{0.5003} \\
        \hline
        Iris & \textbf{0.6844} & 0.5687 & \textbf{0.7391} & 0.7316 \\
        \hline
        LIBRAS & \textbf{0.1747} & 0.0904 & \textbf{0.3709} & 0.2985 \\
        \hline
        Page Blocks & 0.0727 & \textbf{0.1137} & 0.0586 & \textbf{0.0760} \\
        \hline
        kc2 & 0.3621 & \textbf{0.3747} & 0.1780 & \textbf{0.1792} \\
        \hline
        Faces & \textbf{0.0441} & 0.0424 & \textbf{0.2912} & 0.2749 \\
        \hline
        Ozone & \textbf{0.0627} & 0.0552 & 0.1065 & \textbf{0.1444} \\
        \hline
        Bank & \textbf{0.2599} & 0.2245 & 0.0874 & \textbf{0.0875} \\
        \hline
        Ionosphere & 0.1986 & \textbf{0.6359} & 0.2153 & \textbf{0.5615} \\
        \hline
        Mozilla & 0.1213 & \textbf{0.2791} & 0.1589 & \textbf{0.1806} \\
        \hline
        Tokyo & 0.4180 & \textbf{0.4467} & 0.2793 & \textbf{0.3147} \\
        \hline
        Vehicle & \textbf{0.1071} & 0.0971 & \textbf{0.1837} & 0.1807 \\
        \hline
    \end{tabular}
    \caption{\label{fig:tuned}\textit{Performance comparison of SNG-DBSCAN and DBSCAN++ (with uniform sampling) with $\epsilon$ tuned appropriately and sampling rate tuned over grid [0.1,0.2,..,0.9] to maximize ARI and AMI clustering scores.}}
\vspace{-0.25cm}
\end{figure}

In Figure~\ref{fig:tuned}, we compare DBSCAN++ and SNG-DBSCAN on the smaller datasets under optimal tuning of the sampling rate as well as the other hyperparameters. We find that under optimal tuning, these algorithms have competitive performance against each other.

\section{Conclusion}

Density clustering has had a profound impact on machine learning and data mining; however, it can be very expensive to run on large datasets. We showed that the simple idea of subsampling the neighborhood graph leads to a procedure, which we call SNG-DBSCAN that runs in $O(n\log n)$ time and comes with statistical consistency guarantees. We showed empirically on a variety of real datasets that SNG-DBSCAN can offer a tremendous savings in computational resources while maintaining clustering quality. Future research directions include using adaptive instead of uniform sampling of edges, combining SNG-DBSCAN with other speedup techniques, and paralellizing the procedure for even faster computation.

\newpage

\section*{Broader Impact}

As stated in the introduction, DBSCAN has a wide range of applications within machine learning and data mining. Our contribution is a more efficient variant of DBSCAN. The potential impact of SNG-DBSCAN lies in considerable savings in computational resources and further applications of density clustering which weren't possible before due to scalability constraints.

\begin{ack}
Part of the work was done when Jennifer Jang was employed at Uber. Jennifer also received unemployment benefits from the New York State Department of Labor during the time she worked on this paper.
\end{ack}

{
\bibliography{paper}
\bibliographystyle{plainnat}
}

\newpage
\appendix
\section{Proofs}

Let $\mathcal{P}_n$ be the empirical distribution w.r.t. $X_{[n]}$.
We need the following result giving uniform guarantees on the masses of empirical balls with respect to the mass of true balls w.r.t. $\mathcal{P}$. 
\begin{lemma}[\citet{chaudhuri2010rates}] \label{ball_bounds} 
Pick $0 < \delta < 1$. Then with probability at least $1 - \delta$, for every ball $B \subset \mathbb{R}^D$ and $k \ge D \log n$, we have
\begin{align*}
\mathcal{P}(B) \ge C_{\delta, n} \frac{\sqrt{D \log n}}{n} &\Rightarrow \mathcal{P}_n(B) > 0\\
\mathcal{P}(B) \ge \frac{k}{n} + C_{\delta, n} \frac{\sqrt{k}}{n} &\Rightarrow \mathcal{P}_n(B) \ge \frac{k}{n} \\
\mathcal{P}(B) \le \frac{k}{n} - C_{\delta, n}\frac{\sqrt{k}}{n} &\Rightarrow \mathcal{P}_n(B) < \frac{k}{n},
\end{align*}
where $C_{\delta, n} := 16\log(2/\delta)\sqrt{D \log n}$.
\end{lemma}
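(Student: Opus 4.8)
\subsection*{Proof proposal}

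The plan is to recognize this as a standard uniform-convergence statement over the class of Euclidean balls and to derive all three implications from a single two-sided \emph{relative} (multiplicative) Vapnik--Chervonenkis deviation inequality. The first ingredient is that the family $\mathcal{B} = \{B(x,r) : x \in \mathbb{R}^D,\ r \ge 0\}$ of balls in $\mathbb{R}^D$ is a VC class of dimension $D+1$; equivalently, via the lifting $x \mapsto (x, |x|^2)$, membership of a point in a ball reduces to membership in a halfspace in $\mathbb{R}^{D+1}$. Hence the growth function of $\mathcal{B}$ on $n$ points is at most $(en)^{D+1}$, and symmetrization controls all balls simultaneously.

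The second step is to invoke the relative VC bound: with probability at least $1-\delta$, simultaneously for every $B \in \mathcal{B}$,
\[
\mathcal{P}(B) - \mathcal{P}_n(B) \le C'\sqrt{\mathcal{P}(B)\,\eta_n}, \qquad \mathcal{P}_n(B) - \mathcal{P}(B) \le C'\sqrt{\mathcal{P}(B)\,\eta_n},
\]
where $\eta_n \asymp (D\log n + \log(1/\delta))/n$ collects the growth-function and confidence terms and $C'$ is absolute. Each implication then follows quickly. For the first, if $\mathcal{P}_n(B)=0$ the left bound forces $\mathcal{P}(B) \le C'\sqrt{\mathcal{P}(B)\,\eta_n}$, hence $\mathcal{P}(B) \le C'^2\eta_n$; contrapositively $\mathcal{P}_n(B)>0$ whenever $\mathcal{P}(B) > C'^2\eta_n$, and the stated threshold $C_{\delta,n}\sqrt{D\log n}/n = 16\log(2/\delta)\,D\log n/n$ exceeds $C'^2\eta_n$ since a product of logarithmic factors dominates their sum. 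For the second, rearranging gives $\mathcal{P}_n(B) \ge \mathcal{P}(B) - C'\sqrt{\mathcal{P}(B)\,\eta_n}$; substituting the hypothesis $\mathcal{P}(B) \ge k/n + C_{\delta,n}\sqrt{k}/n$ and bounding $C'\sqrt{\mathcal{P}(B)\,\eta_n}$ by $\tfrac12 C_{\delta,n}\sqrt{k}/n$ yields $\mathcal{P}_n(B) \ge k/n$. The third implication is symmetric, using the right-hand relative bound together with $\mathcal{P}(B) \le k/n - C_{\delta,n}\sqrt{k}/n$.

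The main obstacle is purely the bookkeeping that converts the relative bounds --- in which $\mathcal{P}(B)$ sits under a square root --- into the clean additive thresholds $k/n \pm C_{\delta,n}\sqrt{k}/n$, and the verification that the explicit constant $16\log(2/\delta)\sqrt{D\log n}$ genuinely absorbs the VC contribution $(D+1)\log(en)$ buried in $\eta_n$. Concretely, the second implication is a quadratic inequality in $\sqrt{\mathcal{P}(B)}$: the requirement $\mathcal{P}(B) - C'\sqrt{\mathcal{P}(B)\,\eta_n} \ge k/n$ holds once $\sqrt{\mathcal{P}(B)} \ge \tfrac12\bigl(C'\sqrt{\eta_n} + \sqrt{C'^2\eta_n + 4k/n}\bigr)$, and one checks that the hypothesized lower bound on $\mathcal{P}(B)$ (together with $k \ge D\log n$, which guarantees $\sqrt{k/n}$ dominates $\sqrt{\eta_n}$) implies this after absorbing absolute constants into $C_{\delta,n}$. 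Since balls are the correct VC class and the relative inequality is classical, no new probabilistic machinery is needed; the statement is exactly the empirical-ball lemma of \citet{chaudhuri2010rates}, and the argument above reproduces their derivation.
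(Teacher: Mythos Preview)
The paper does not actually prove this lemma: it is quoted verbatim from \citet{chaudhuri2010rates} and used as a black box in the subsequent proofs. Your sketch via the VC dimension of Euclidean balls and the two-sided relative (multiplicative) Vapnik--Chervonenkis deviation inequality is exactly the mechanism behind the cited result, so there is nothing in the present paper to compare against and your approach is the standard one.
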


\begin{proof}[Proof of Lemma~\ref{lemma:mincut_neighborhood}]
Suppose that $S$ is a proper subset of the nodes in $G_{n, \epsilon}(i)$. 
Let $S^C = (X_{[n]} \cap C_i) \backslash S$ be the complement of $S$ in $G_{n, \epsilon}(i)$. 
We lower bound the cut of $S$ and $S^C$ in $G_{n, \epsilon}(i)$.

Let $r_0 := \left(\frac{16\log(2/\delta)\cdot D\log n}{\lambda_C \cdot \rho \cdot n}\right)^{1/D}$. Then for any $x \in C_i$, we have:
\begin{align*}
    \mathcal{P}(B(x, r_0)) \ge \lambda_C \cdot \text{Volume}(B(x, r_0) \cap C_i)
    \ge \lambda_C \cdot \rho \cdot v_D \cdot r_0^D
 = C_{\delta, n} \frac{\sqrt{D \log n}}{n}.
\end{align*}
Thus, by Lemma~\ref{ball_bounds}, there exists a sample point in $B(x, r_0)$.
It follows that $\{B(x, r_0) : x \in S \cup S^C\}$ forms a cover of $C_i$ (i.e $C_i \subseteq \bigcup \{B(x, r_0) : x \in S \cup S^C\}$). Since $C_i$ is connected, then there exists $x \in S$ and $x' \in S^C$ such that $B(x, r_0)$ and $B(x', r_0)$ intersect and thus $|x - x'| \le 2\cdot r_0$.

The cut of $S$ and $S^C$ in $G_{n, \epsilon}(i)$ thus contains all edges from $x$ to $S^C$ and from $x'$ to $S$, which is at least the number of nodes in $B(x, \epsilon) \cap B(x', \epsilon)$. We have
\begin{align*}
    \text{Cut}(S, S^C) \ge |B(x, \epsilon) \cap B(x', \epsilon) \cap X_{[n]}| \ge |B(x, \epsilon - 2r_0) \cap X_{[n]}| = n \cdot \mathcal{P}_n(B(x, \epsilon - 2r_0)).
\end{align*}
We have
\begin{align*}
    \mathcal{P}(B(x, \epsilon - 2r_0)) \ge \lambda_C \cdot \rho \cdot v_D\cdot  (\epsilon - 2r_0)^D \ge  \lambda_C \cdot \rho \cdot v_D\cdot \epsilon^D \cdot \left(1 - \frac{2D r_0}{\epsilon}\right) \ge \frac{1}{2} \lambda_C \cdot \rho \cdot v_D\cdot \epsilon^D,
\end{align*}
which holds for $n$ sufficiently large as in the statement of the Lemma for some $C_{D, p}$. By Lemma~\ref{ball_bounds}, we have
\begin{align*}
    \mathcal{P}_n(B(x, \epsilon - 2r_0)) \ge \frac{1}{4}\cdot  \lambda_C \cdot \rho \cdot v_D\cdot \epsilon^D,
\end{align*}
which also holds for $n$ sufficiently large as in the statement of the Lemma for some $C_{D, p}$. The result follows.
\end{proof}

\begin{proof}[Proof of Lemma~\ref{lemma:corepoints}]
Let $N_{n,s}(x, \epsilon)$ be the neighbors of $x$ in the sampled $\epsilon$-neighborhood graph.
Suppose that $x \in C_i$ for some $i \in [\ell]$. Then by Hoeffding's inequality, we have
\begin{align*}
    P\left(\frac{|N_{n, s}(x, \epsilon)|}{sn} \le \mathcal{P}(B(x, \epsilon)) - \sqrt{\frac{\log(n) + \log(1/\delta)}{2sn}}\right) \le \frac{\delta}{n}.
\end{align*}
Thus, with probability at least $1-\delta/n$, we have
\begin{align*}
    |N_{n, s}(x, \epsilon)| &\ge sn\cdot \mathcal{P}(B(x, \epsilon)) - \sqrt{\frac{1}{2} \left(\log(n) + \log(1/\delta)\right)\cdot sn} \\
    &\ge sn\rho\cdot v_D \cdot \epsilon^D \cdot \lambda_C - \sqrt{\frac{1}{2} \left(\log(n) + \log(1/\delta)\right)\cdot sn}.
\end{align*}
Now, suppose that $x \in N$. Then by Hoeffding's inequality, we have
\begin{align*}
    P\left(\frac{|N_{n, s}(x, \epsilon)|}{sn} \ge \mathcal{P}(B(x, \epsilon)) + \sqrt{\frac{\log(n) + \log(1/\delta)}{2sn}}\right) \le \frac{\delta}{n}.
\end{align*}
Thus, with probability at least $1-\delta/n$, we have
\begin{align*}
     |N_{n, s}(x, \epsilon)| &\le sn\cdot \mathcal{P}(B(x, \epsilon)) + \sqrt{\frac{1}{2} \left(\log(n) + \log(1/\delta)\right)\cdot sn} \\
    &\le sn\cdot v_D \cdot \epsilon^D \cdot \lambda_N + \sqrt{\frac{1}{2} \left(\log(n) + \log(1/\delta)\right)\cdot sn}.
\end{align*}
Hence, the following holds uniformly with probability at least $1 - \delta$. When $x \in C_i$ for some $i$, we have
\begin{align*}
    \frac{|N_{n, s}(x, \epsilon)|}{sn} \ge \rho\cdot\lambda_C\cdot  v_D \cdot \epsilon^D - \sqrt{\frac{\log(n) + \log(1/\delta)}{2\cdot sn}},
\end{align*}
and otherwise we have
\begin{align*}
    \frac{|N_{n, s}(x, \epsilon)|}{sn} \le \lambda_N \cdot  v_D \cdot \epsilon^D + \sqrt{\frac{\log(n) + \log(1/\delta)}{2\cdot sn}}.
\end{align*}
The result follows because $\text{minPts}$ is the threshold on $\frac{|N_{n, s}(x, \epsilon)|}{sn}$ that decides whether a point is a core-point. There are no border points because $\epsilon < R_S$ and there are no points within $R_S$ of $C_i$ for $i \in [\ell]$.
\end{proof}

\begin{proof}[Proof of Lemma~\ref{lemma:karger}]
Follows from Theorem 2.1 of \cite{karger1999random}.
\end{proof}

\begin{proof}[Proof of Theorem~\ref{theo:levelset}]
There are two quantities to bound: (i) $\max_{x \in \widehat{L_f(\lambda)}} d (x,  L_f(\lambda))$,  and (ii) $\sup_{x \in L_f(\lambda)} d (x, \widehat{L_f(\lambda)})$. We start with (i).
Define
\begin{align*}
    \widehat{f}(x) := \frac{1}{v_D\cdot \epsilon^D} \cdot \frac{\text{deg}_G(x)}{s\cdot n},\hspace{1cm}  f_\epsilon(x) := \frac{1}{v_D\cdot \epsilon^D} \cdot \int_{x' \in B(x,\epsilon)} f(x') dx',
\end{align*} where $\text{deg}_G$ denotes the degree of a node in the subsampled $\epsilon$-neighborhood graph $G$ as per Algorithm~\ref{alg:dbscan_subsampled} and $f_\epsilon(x)$ can be viewed as the density $f$ smoothed with a uniform kernel with radius $\epsilon$.
Then, we have for all $x \in X_{[n]}$, the probability that an edge incident to $x$ appearing in $G$ has probability $v_D \cdot \epsilon^D \cdot f_\epsilon(x)$. Thus we have by Hoeffding's inequality:
\begin{align*}
    \mathbb{P}(|\widehat{f}(x) - f_\epsilon(x)| \ge \eta) \le 1 - 2\exp(-\eta^2\cdot s\cdot n).
\end{align*}
Setting
\begin{align*}
    \eta =\sqrt{\frac{\log(4n) + \log(1/\delta)}{s\cdot n}},
\end{align*}
we have that by union bound, with probability at least $1 - \delta/2$ uniformly in $x \in X_{[n]}$:
\begin{align*}
    |\widehat{f}(x) - f_\epsilon(x)| \le \sqrt{\frac{\log(4n) + \log(1/\delta)}{s\cdot n}}.
\end{align*}
Now, for $r > 0$ to be specified later, we have that if $x \in X_{[n]}$ such that $d(x, L_f(\lambda)) \ge r$, then by Assumption~\ref{assumption:level} for $n$ sufficiently large depending on $f$, $\epsilon$, and $\text{MinPts}$:
\begin{align*}
    f(x) \le \lambda - \widecheck{C} r^\beta\Rightarrow f_\epsilon(x) \le \lambda - \widecheck{C} (r-\epsilon)^\beta.
\end{align*}
Therefore,
\begin{align*}
    \widehat{f}(x) \le \lambda - \widecheck{C} (r-\epsilon)^\beta + \sqrt{\frac{\log(4n) + \log(1/\delta)}{s\cdot n}}.
\end{align*}
Hence, if the following holds:
\begin{align*}
   \lambda - \widecheck{C} (r-\epsilon)^\beta + \sqrt{\frac{\log(4n) + \log(1/\delta)}{s\cdot n}} < \frac{\text{MinPts}}{v_D \cdot \epsilon^D \cdot s \cdot n},
\end{align*}
then $x$ is not a core point and hence $\max_{x \in \widehat{L_f(\lambda)}} d (x,  L_f(\lambda)) \le r$. This holds by choosing $r$ to be the bound specified in the theorem statement for some $C$ depending on $\widecheck{C}, \widehat{C}, \beta$ by generalized mean inequality.
Now we move onto the other direction. We first show that each $x \in L_f(\lambda)\cap X_{[n]}$ is a core point. Indeed we have
\begin{align*}
    \widehat{f}(x) &\ge f_\epsilon(x) - \sqrt{\frac{\log(4n) + \log(1/\delta)}{s\cdot n}} \ge \inf_{x' \in B(x, \epsilon)} f(x') -  \sqrt{\frac{\log(4n) + \log(1/\delta)}{s\cdot n}} \\
    &\ge \lambda - \widehat{C} \epsilon^\beta - \sqrt{\frac{\log(4n) + \log(1/\delta)}{s\cdot n}} = \frac{\text{MinPts}}{v_D \cdot \epsilon^D \cdot s \cdot n}.
\end{align*}
Therefore, each $x \in L_f(\lambda)\cap X_{[n]}$ is a core point.
Finally, it suffices to bound the distance from any $x \in L_f(\lambda)$ to some point in $L_f(\lambda)\cap X_{[n]}$. We have for all $x \in  L_f(\lambda)$ that
\begin{align*}
    \mathcal{P}(B(x, \epsilon)) = v_D \cdot \epsilon^D \cdot f_\epsilon(x) \ge  v_D \cdot \epsilon^D \cdot (\lambda - \widehat{C} \epsilon^\beta) \ge \frac{16D\cdot \log(4/\delta)\cdot \log n}{n},
\end{align*}
for $n$ sufficiently large. Then we have by Lemma~\ref{ball_bounds} that with probability at least $1 - \delta/2$,  $\sup_{x \in L_f(\lambda)} d (x, \widehat{L_f(\lambda)}) \le \epsilon$, as desired.
\end{proof}

\section{Additional Experiments}

\begin{figure}[H]
\centering
    \begin{tabular}{ |p{2cm}||p{1.5cm}|p{1.0cm}|p{0.3cm}|p{0.8cm}|p{0.9cm}|p{2.0cm}| }
        \hline
        & $n$ & $D$ & $c$ & $s$ & MinPts & Range of $\epsilon$\\
        \hline\hline
        Australian & 1,000,000 & 14 & 2 & 0.001 & 10 & [0.2, 2.5) \\
        \hline
        Still & 949,983 & 3 & 6 & 0.001 & 10 & [0.1, 13.5)\\
        \hline
        Watch & 3,205,431 & 3 & 7 & 0.01 & 10 & [0.001, 0.05)\\
        \hline
        Satimage & 1,000,000 & 36 & 6 & 0.02 & 10 & [1, 6) \\
        \hline
        Phone & 1,000,000 & 3 & 7 & 0.001 & 10 & [0.001, 0.06)\\
        \hline
        Wearable \cite{ugulino2012wearable} & 165,633 & 16 & 5 & 0.01 & 10 & [1, 80)\\
        \hline
        Iris & 150 & 4 & 3 & 0.3 & 10 & [0.1, 2.2)\\
        \hline
        LIBRAS & 360 & 90 & 15 & 0.3 & 10 & [0.5, 1.6)\\
        \hline
        Page Blocks & 5,473 & 10 & 5 & 0.1 & 10 & [1, 10,000)\\
        \hline
        kc2 \cite{shirabad2005promise} & 522 & 21 & 2 & 0.3 & 10 & [50, 7,000) \\
        \hline
        Faces & 400 & 4,096 & 40 & 0.3 & 10 & [6, 10)\\
        \hline
        Ozone & 330 & 9 & 35 & 0.3 & 10 & [100, 800)\\
        \hline
        Bank & 8,192 & 32 & 10 & 0.01 & 10 & [3.7, 7.4)\\
        \hline
        Ionosphere & 351 & 33 & 2 & 0.3 & 10 & [1, 5.5) \\
        \hline
        Mozilla & 15,545 & 5 & 2 & 0.03 & 2 & [1, 7,000) \\
        \hline
        Tokyo & 959 & 44 & 2 & 0.1 & 2 & [10K, 1,802K)\\
        \hline
        Vehicle \cite{siebert1987vehicle} & 846 & 18 & 4 & 0.3 & 10 & [10, 40) \\
        \hline
    \end{tabular}
    \caption{\label{fig:datasetsummary_additional}\textit{Summary of all datasets used.} We give the datasets used in the main text along with eight additional real datasets. For each dataset, we give $n$ (dataset size), $D$ (number of features), $c$ (number of clusters), and $s$ (sampling rate). We tuned each dataset over 10 equally-spaced $\epsilon$ values in the range given. We chose $s$ for each dataset from a small set of values roughly depending on the size of the dataset. In theory, $s$ can be chosen as $O(\log{n} / n)$, but there is a constant multiplier that is data-dependent, and it was much simpler to tune $s$ over a small grid, which is described in the captions of the experiment results.}
\end{figure}

\begin{figure}[H]
\begin{center}
\includegraphics[width=1.0\textwidth]{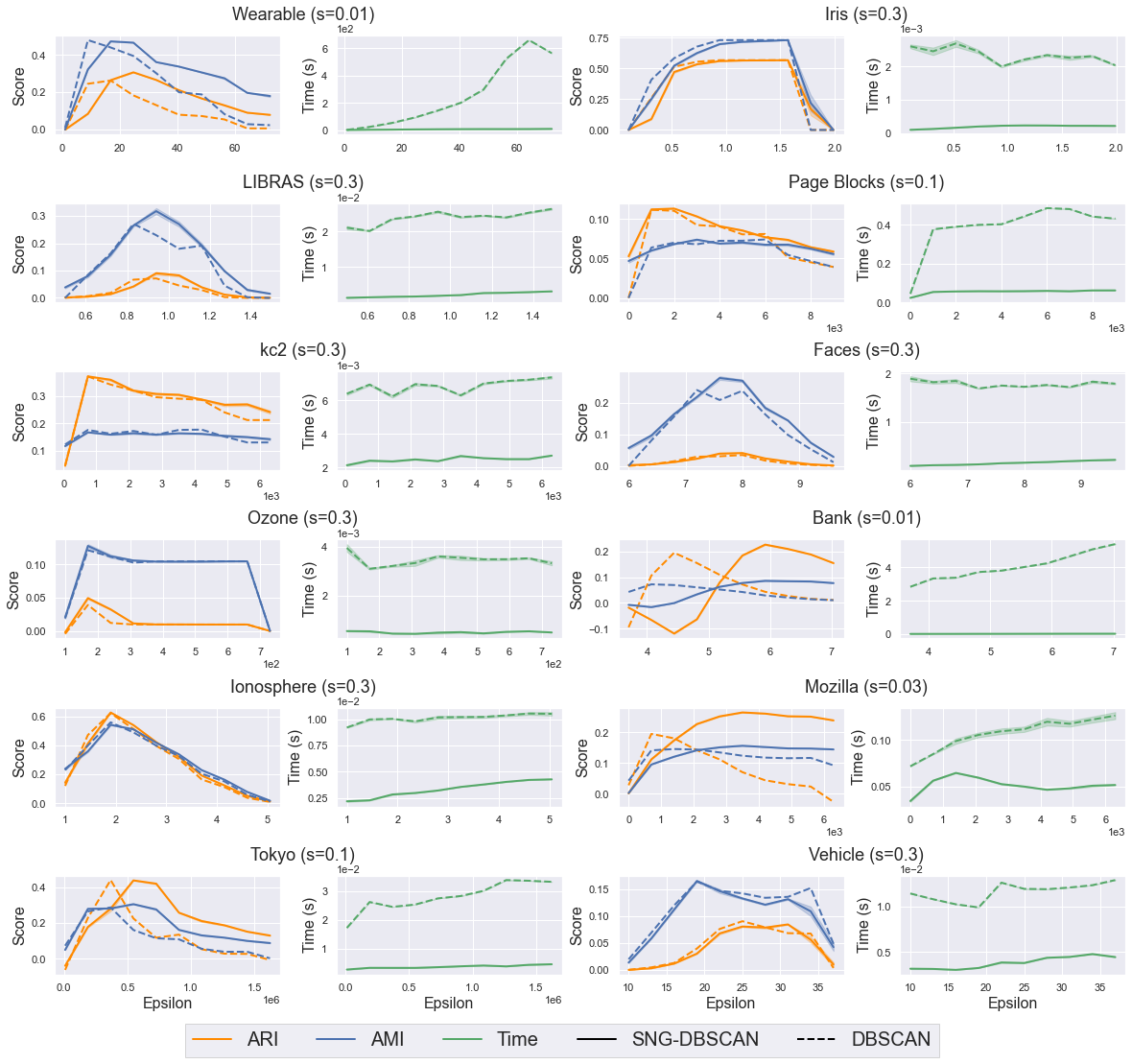}
\caption{{\bf Small datasets}. We show the performance of SNG-DBSCAN and DBSCAN on 12 smaller datasets. Runtimes and scores for both SNG-DBSCAN and DBSCAN are averaged over 10 runs, and 95\% confidence intervals (standard errors) are shown. We ran these experiments on a local machine with 16 GB RAM and a 2.8 GHz Quad-core Intel Core i7 processor. We used $s \in \{0.01, 0.03, 0.1, 0.3\}$ depending on the size of the dataset, across a wide range of epsilon, and with $\text{MinPts}=10$ for most datasets, other than Mozilla and Tokyo where we used $\text{MinPts}=2$.}\label{fig:small_datasets}
\end{center}
\end{figure}

\begin{figure}[H]
\begin{center}
\includegraphics[width=1.0\textwidth]{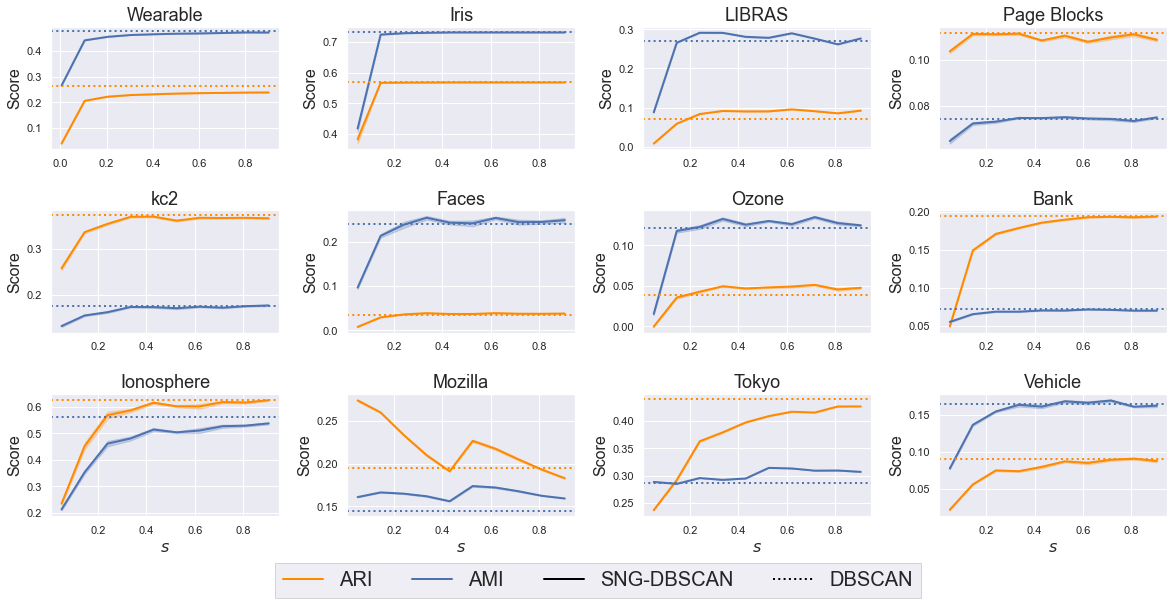}
\caption{{\bf Sampling rates}. We plot the performance of SNG-DBSCAN across different sampling rates $s$ to be better understand the effect of $s$ on the clustering quality. We make comparisons to the best performance of DBSCAN shown as the dotted vertical lines for each score. SNG-DBSCAN is run on the same epsilon as the DBSCAN benchmark for that dataset. SNG-DBSCAN values are averaged over 10 runs, and 95\% confidence intervals (standard errors) are shown. We see that SNG-DBSCAN converges quickly to comparative or better clusters even at low sampling rates suggesting that often-times a small $s$ suffices, highlighting both the stability and scalabiltiy of SNG-DBSCAN.}\label{fig:s}
\end{center}
\end{figure}

\begin{figure}[H]
    \begin{center}
        \begin{tabular}{ |p{2cm}||p{2.6cm}|p{2.6cm}||p{2.6cm}|p{2.6cm}|}
            \hline
            & Base ARI & SNG ARI & Base AMI & SNG AMI\\
            \hline\hline
            Wearable & 0.2626 (55s) & \textbf{0.3064} (8.2s) & \textbf{0.4788} (26s) & 0.4720 (6.6s) \\
            \hline
            Iris & \textbf{0.5681} (<0.01s) & \textbf{0.5681} (<0.01s) & \textbf{0.7316} (<0.01s) & \textbf{0.7316} (<0.01s) \\
            \hline
            LIBRAS & 0.0713 (0.03s) & \textbf{0.0903} (<0.01s) & 0.2711 (0.02s) & \textbf{0.3178} (<0.01s) \\
            \hline
            Page Blocks & 0.1118 (0.38s) & \textbf{0.1134} (0.06s) & \textbf{0.0742} (0.49s) & 0.0739 (0.06s) \\
            \hline
            kc2 & 0.3729 (<0.01s) & \textbf{0.3733} (<0.01s) & \textbf{0.1772} (<0.01s) & 0.1671 (<0.01s) \\
            \hline
            Faces & 0.0345 (1.7s) & \textbf{0.0409} (0.16s) & 0.2399 (1.7s) & \textbf{0.2781} (0.15s) \\
            \hline
            Ozone & 0.0391 (<0.01s) & \textbf{0.0494} (<0.01s) & 0.1214 (<0.01s) & \textbf{0.1278} (<0.01s) \\
            \hline
            Bank & 0.1948 (3.4s) & \textbf{0.2265} (0.03s) & 0.0721 (3.3s) & \textbf{0.0858} (0.03s) \\
            \hline
            Ionosphere & 0.6243 (0.01s) & \textbf{0.6289} (<0.01s) & \textbf{0.5606} (0.01s) & 0.5437 (<0.01s) \\
            \hline
            Mozilla & 0.1943 (0.08s) & \textbf{0.2642} (0.05s) & 0.1452 (0.09s) & \textbf{0.1558} (0.05s) \\
            \hline
            Tokyo & \textbf{0.4398} (0.02s) & 0.4379 (<0.01s) & 0.2872 (0.02s) & \textbf{0.3053} (<0.01s) \\
            \hline
            Vehicle & \textbf{0.0905} (0.01s) & 0.0845 (<0.01s) & 0.1643 (<0.01s) & \textbf{0.1653} (<0.01s) \\
            \hline
        \end{tabular}
    \caption{\label{fig:performance_additional}\textit{Clustering performance for additional datasets.} Best Adjusted Rand Index and Adjusted Mutual Information scores for both DBSCAN and SNG-DBSCAN under best hyperparameter tuning are given with associated runtimes in parentheses.}
    \end{center}
\end{figure}

\end{document}